\theoremstyle{plain}
\newtheorem{theorem}{Theorem}[section]
\newtheorem{lemma}[theorem]{Lemma}
\theoremstyle{definition}
\theoremstyle{remark}
\title{Unsupervised Dynamic Feature Selection for Robust Latent Spaces in Vision Tasks
 }
\author{%
  Bruno Corcuera\\
  Universidade da Coruña\\
  A Coruña, Spain\\
  \texttt{bruno.sanchez1@udc.es}
  \And
  Carlos Eiras-Franco\\
  CITIC Research Center\\
  Universidade da Coruña\\
  A Coruña, Spain\\
  \texttt{carlos.eiras.franco@udc.es}
  \And
  Brais Cancela\\
  CITIC Research Center\\
  Universidade da Coruña\\
  A Coruña, Spain\\
  \texttt{brais.cancela@udc.es}
}
\begin{document}

\maketitle

\begin{abstract}
Latent representations are critical for the performance and robustness of machine learning models, as they encode the essential features of data in a compact and informative manner. However, in vision tasks, these representations are often affected by noisy or irrelevant features, which can degrade the model’s performance and generalization capabilities. This paper presents a novel approach for enhancing latent representations using unsupervised Dynamic Feature Selection (DFS). For each instance, the proposed method identifies and removes misleading or redundant information in images, ensuring that only the most relevant features contribute to the latent space. By leveraging an unsupervised framework, our approach avoids reliance on labeled data, making it broadly applicable across various domains and datasets. Experiments conducted on image datasets demonstrate that models equipped with unsupervised DFS achieve significant improvements in generalization performance across various tasks, including clustering and image generation, while incurring a minimal increase in the computational cost.%This contribution highlights the potential of dynamic feature selection to enhance the overall performance of machine learning models.

\end{abstract}

\section{Introduction}
\label{sec:introduction}

Feature selection and feature extraction are two essential techniques in machine learning and data analysis, both aimed at improving the efficiency and effectiveness of predictive modeling \cite{zebari2020comprehensive}. While both approaches share the goal of reducing the dimensionality of datasets, they diverge significantly in their fundamental strategies and objectives. 

Feature selection aims to identify a subset of relevant features from a larger set of available features that are most informative for a given task \cite{dhal2022comprehensive}. Traditional feature selection methods typically rely on static criteria, selecting a fixed set of features during model training \cite{balin2019concrete,roffo2020infinite,cancela2023e2e}. However, in dynamic and evolving data environments, where the relevance of features may change over different samples, static feature selection may not be optimal. Dynamic Feature Selection (DFS) (also referred as \textit{Instance-based or Instancewise Feature Selection} \cite{yoon2018invase,panda2021instance,liyanage2021dynamic}) represents a paradigm shift in feature selection by recognizing the variability of feature importance \cite{chen2018learning,arik2021tabnet} between each sample. Unlike traditional static methods, DFS algorithms adaptively adjust the feature subset during model training or deployment, accommodating changes in data characteristics and task requirements.

Moreover, feature extraction techniques transform the original feature space into a new space by creating a set of derived features that capture essential information from the original data. These methods, such as Principal Component Analysis (PCA), Linear Discriminant Analysis (LDA), or any deep features derived from a deep learning model, seek to maximize the discriminative power or variance of the transformed features \cite{perera2019learning,tang2022multiscale,izmailov2022feature}. In doing so, they often create a smaller, more compact representation of the data, potentially enhancing model performance but hindering its interpretability.

While both feature extraction and feature selection enhance the quality of input data for machine learning models, they differ fundamentally in their approach. Feature extraction generates entirely new features, potentially altering the interpretability of the data, while feature selection retains the original features, preserving the original meaning and context. DFS integrates feature extraction's versatility with the interpretability of feature selection. No prior DFS method works without labels and preserves 2‑D structure – DDS fills that gap.

\begin{figure*}[ht]
	\centering
	\begin{minipage}[]{1.0\linewidth}
	    \centering
 		\includegraphics[width=1.0\textwidth]{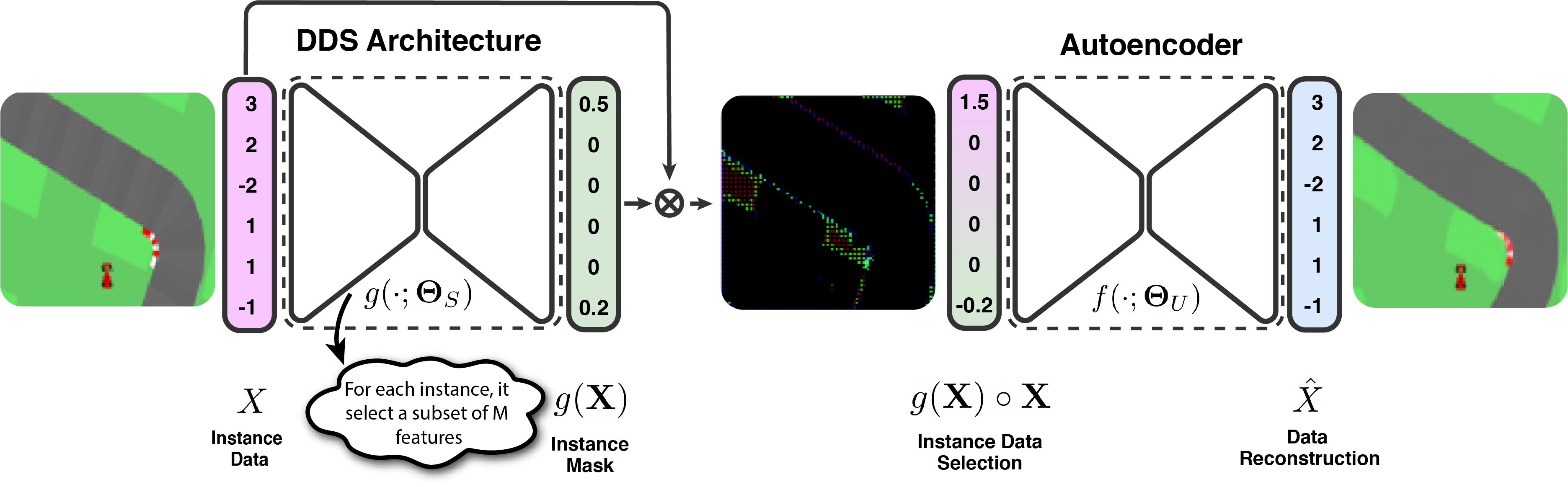}\\
	\end{minipage}
\caption{
 Proposed method. The DDS module is prepended to an existing architecture (an autoencoder in this case), substituting its input with an equally shaped masked version that retains only the most relevant features. DDS is in charge of selecting, for each sample, the most relevant features for the downstream architecture to solve the unsupervised task (in this particular example, data reconstruction).
}
\label{fig:architecture}
\end{figure*}

Thus, this work presents a novel DFS method that aims to improve the latent representations by removing information that is either irrelevant or leads to erroneous interpretations (i.e. background elements), which may appear in different locations depending on the input data. The main contributions of this paper are the following:

\begin{itemize}
    \item We present Dynamic Data Selection (DDS), a novel unsupervised dynamic feature selection algorithm. To our knowledge, this is the first attempt to provide a DFS solution for unsupervised scenarios.
    \item Contrary to previous supervised approaches like \cite{chen2018learning}, DDS's memory consumption is minimal, and invariant to the maximum number of selected features.
    \item DDS can be easily adapted to a variety of problems and architectures, and it enables the architect designer to use complex networks since it preserves the position of the selected features. Fig. \ref{fig:architecture} shows, as an example, how to attach the DDS architecture to solve a data reconstruction task.
    %\item A variation to the hard concrete distribution \cite{louizos2018learning} will be presented. Taking the properties of the DDS algorithm, it will be used during the training procedure to minimize the chances of premature feature eliminations.
    \item We report extensive tests of DDS in two different unsupervised scenarios: clustering, and representation learning for world models. These experiments show DDS's adaptability to enhance the quality of latent representations, as well as its ease of use.
\end{itemize}

%This paper is organized as follows: Section \ref{sec:related_work} introduces the most important DFS approaches; Section \ref{sec:formulation} describes the novel problem formulation, our proposed method, and details regarding the implementation and the training procedure; Section \ref{sec:results} reports experimental results about the ability of this dynamic feature selection approach to compress the input data to be used in a different task, like clustering; and finally, Section \ref{sec:discussion} lists our conclusions and future work ideas.

\section{Related Work}
\label{sec:related_work}

DFS is a recent field of study, with almost no contributions before the rise of deep learning architectures. Four works stand out among the rest: \textit{Learning to Explain} (L2X) \cite{chen2018learning}, \textit{INVASE} \cite{yoon2018invase}, TabNet \cite{arik2021tabnet} and LSPIN \cite{yang2022locally}. However, it is worth noting that these algorithms were entirely developed for supervised learning. %, that is, they require the supervising signal of the task to select the relevant features. % HABLAR DE LSPIN \cite{yang2022locally}

L2X \cite{chen2018learning} (and variants like Greedy \cite{covert2023learning}) consists of an autoencoder-like architecture that is attached before the classification model. Its output is of the form $\mathcal{R}^{N \times F \times M}$, with $M$ being the maximum number of features to be selected. Then, each input sample is transformed by performing a matrix multiplication. Although this solution provides remarkable results in supervised scenarios, it suffers from two major drawbacks: first, it has high memory requirements, as the output size of the model is dependent on the number of maximum features to be selected, forcing $M$ to have smaller values; and second, the matrix multiplication procedure forces the input data to be one dimensional, preventing the use of complex layers like 2D convolutions in the classification model.

INVASE \cite{yoon2018invase} consists of 3 different networks: a selector, a predictor, and a baseline. Inspired by the actor-critic method \cite{peters2008natural}, the predictor and the baseline output classification scores but using different input data: the baseline receives all input features, but the predictor only uses a small subset of the input features. The selector algorithm determines the most suitable subset of input features. Compared to L2X, the main advantage of this algorithm is that it can preserve the spatial information of the input data. A major drawback is that it cannot be adapted to unsupervised scenarios, since it requires supervised data to train.

Moreover, TabNet \cite{arik2021tabnet,shah2022enhanced} and LSPIN \cite{yang2022locally} use sequential attention to select the most important features per sample. They were specially designed to be used in microarray scenarios, where the number of features is far greater than the number of samples. The main advantage of these approaches is their inherent explainability: they facilitate investigating the relevant features for each classification. However, their computational cost is too high for big data environments, and their accuracy is lower than L2X and INVASE.

\section{Problem Formulation}
\label{sec:formulation}

Let $\mathbf{X} \in \mathcal{R}^{N \times F}$ be our input data, where $N$ is the number of instances and $F$ is the total number of different features. We aim to select, for each instance, a maximum number of features, denoted by $M$. Formally speaking, our algorithm aims to solve the following minimization problem:
\begin{equation}
    \label{eq:formulation}
    \begin{aligned}
    & \underset{\mathbf{\Theta}_S, \mathbf{\Theta}_U}{\text{minimize}}
    & & \mathcal{L}(f(g(\mathbf{X}; \mathbf{\Theta}_S) \circ \mathbf{X}; \mathbf{\Theta}_U)) \\
    & \text{subject to}
    & & g(\mathbf{X}; \mathbf{\Theta}_S) \in [0, 1]^{N \times F}, \\
    & 
    & & \| g(\mathbf{X}; \mathbf{\Theta}_S)^{(i)} \|_0 \leq M, \forall i \in \{1..N\}.
    % & & \sum_{i=1}^{|\mathbf{\gamma}|} \gamma_i  \leq M.
    \end{aligned}
\end{equation}

where $\mathcal{L}$ is the unsupervised loss function, $M$ is the maximum number of features to be selected, $g(\cdot; \mathbf{\Theta}_S)$ is the DDS network, and $f(\cdot; \mathbf{\Theta}_U)$ is the unsupervised task to solve. For the sake of simplicity, initially $f(\cdot; \mathbf{\Theta}_U)$ will be considered as an autoencoder that aims to reconstruct the initial features, including those that are masked by $g(\cdot; \mathbf{\Theta}_S)$. Section \ref{sec:results} shows how this approach can be adapted to solve other unsupervised tasks. The key idea is simple: an autoencoder architecture ($f$) is trained while introducing an extra network ($g$), tasked with selecting, at most, $M$ relevant features per sample and masking the rest. After that, those features will be passed to the autoencoder model to reconstruct the whole unmasked input data.

% It is worth noting that this is not the first attempt to create a Dynamic Feature Selection architecture. This idea was previously addressed in \cite{chen2018learning} for a supervised problem. The output of the architecture is of the form $\mathcal{R}^{N \times F \times M}$, being $M$ the maximum number of features to be selected. Then, each input sample is transformed by performing a matrix multiplication. Although this solution provides remarkable results in supervised scenarios, it suffers from two major drawbacks: first, the variable memory requirement, as the output size of the model is dependent on the number of maximum features to be selected, forcing $M$ to have smaller values; the second drawback is related to the matrix multiplication procedure, as it forces to input data to be one dimensional.

In this paper, we introduce the DDS architecture, with a $\mathcal{R}^{N \times F}$  output. The size of this output is not dependent on the maximum number of selected features. Note that this method is easily applicable to existing architectures solving unsupervised problems, as long as they can be trained by using gradient descent. DDS is defined as a module to be inserted before the existing architecture, replacing its input with masked input data. After training the whole architecture, the output of the DDS layer will contain only the most relevant features of each input. Please refer to Appendix \ref{sec:proof} for an intuition of why this improves the representation.

\subsection{DDS Implementation}
\label{sec:DDS_implementation}

As depicted in Eq. \ref{eq:formulation}, two constraints have to be addressed to solve the minimization problem. 
% Although this formulation is very similar to a recent feature selection algorithm called E2E-FS \cite{cancela2023e2e}, the solution proposed by the authors cannot be applied in this particular dynamic approach. E2E-FS aims to select the same $M$ features for all instances, forcing the discarded ones to be zeroed. Although this is a good solution for a static feature selection approach, it is too restrictive for a dynamic version. We aim to keep the scores of the discarded features, so they can be used for other purpose, like model explanation. 

To get a differentiable approach, we propose to reformulate the problem as
\begin{equation}
    \label{eq:reformulation}
    \begin{aligned}
    & \underset{\mathbf{\Theta}_S, \mathbf{\Theta}_U}{\text{minimize}}
    & & \mathcal{L}(f(\tau(\tilde{g}(\mathbf{X}; \mathbf{\Theta}_S) + \delta) \odot \mathbf{\Gamma}_M \odot \mathbf{X}; \mathbf{\Theta}_U)) \\ %+ \alpha \mathcal{L}_0(\tilde{g}(\mathbf{X}; \mathbf{\Theta}_S)) \\
    & \text{subject to}
    & & \tau(\tilde{g}(\mathbf{X}; \mathbf{\Theta}_S) + \delta) \in [0, 1]^{N \times F}, \\
    & 
    & & \mathbf{\Gamma}_M \in \{0, 1\}^{N \times F}, \\
    & 
    & & \| \mathbf{\Gamma}_M \|_0 \leq M, \forall i \in \{1..N\}.
    % & & \sum_{i=1}^{|\mathbf{\gamma}|} \gamma_i  \leq M.
    \end{aligned}
\end{equation}
where
\begin{equation}
\label{eq:hard_concrete}
    \tau(x) = \min \left(1, \max \left(0, \sigma \left(\dfrac{x}{\beta} \right)(\zeta - \gamma) + \gamma \right) \right),
\end{equation}
where $\sigma$ is the sigmoid function. $\tau(x)$ is the hard concrete gate presented in \cite{louizos2018learning}, and $\delta$ is a hyper-parameter to ensure non-zero values at the beginning of the training.

The idea is to split the DDS model $g$ into two different terms. In the first place, $\tilde{g}(\cdot; \mathbf{\Theta}_S)$ is implemented in the same way as $g$ was previously defined, without the 0-norm constraint. This constraint is now placed to a different matrix, called $\mathbf{\Gamma}_M$. $\mathbf{\Gamma}_M$ is a binary matrix with all zeroes but the top-M scores of each sample of $\tilde{g}(\cdot; \mathbf{\Theta}_S)$.  By default, the hyper-parameters are set to $\beta = \tfrac{2}{3}$, $\delta = 1$, $\zeta = 1$ and $\gamma = 0$.

\subsection{Hacking the Training Procedure}
\label{sec:training}

The training procedure is similar to the one used in \cite{louizos2018learning}, with three slight variations.

First, the $l_0$-norm regularization term is not included. As the $\Gamma$ matrix already removes $F - M$ features, it is unnecessary to use a regularization term to force some features to drop their importance to $0$. For the same reason, $\gamma < 0$ values are not needed. Our ablation study (Section \ref{sec:ablation}) shows that using the default $\zeta = 1.1$ and $\gamma = -0.1$ values provided in \cite{louizos2018learning} results in a performance drop in the performance.

Second, an additional hyperparameter is used to control the binary concrete distribution. The hard binary concrete distribution presented in \cite{louizos2018learning} aims to increase the probability mass near 0 and 1, to either force some features to be discarded, or to increase the feature probability to near 1. Using the original distribution is also counterproductive, as it could unadvisedly force the model to remove more features than the target $M$, causing a degradation in performance. A variation of this distribution is presented to avoid this problem, and a hyperparameter is included in the formulation. This distribution is defined as
\begin{equation}
    \label{eq:distribution}
    \tau_u(x) = \min \left(1, \max \left(0, \tilde{\sigma}_u(x)(\zeta - \gamma) + \gamma \right) \right),
\end{equation}
where
\begin{equation}
    \tilde{\sigma}_u(x) = \sigma \left(\dfrac{\kappa(\log(u) - \log(1 - u)) + x}{\beta} \right), ~u \in \mathcal{U}(0, 1),
\end{equation}
and $\kappa \in [0, 1]$. By default, $\kappa$ is set to $0.1$ during the training procedure, and $0$ for testing.

Finally, early experiments suggest that the algorithm struggles with initialization when using low $M$ values ($M << F$). To solve this problem, $M$ is dynamically changed, during training, by using the following variation:
\begin{equation}
    \label{eq:dynamic_M}
    \begin{aligned}
    M^{(i)}_t &= \begin{cases}
                        M \quad p^{(i)}_t > \epsilon, \quad p^{(i)}_t \in~ \mathcal{U}(0, 1) \\
                        F \quad \text{otherwise}
                    \end{cases}
    \end{aligned}
\end{equation}
For each training instance $i$, at any given epoch $t$, the DDS mask $\Gamma_M$ selects, instead of $M$ features, all $F$ features, with a probability lower than $\epsilon$. By default, $\epsilon$ is set to 0.1.

% Besides that, the methodology training remains consistent with the main task, requiring only the incorporation of the DDS architecture into the original problem framework. By seamlessly integrating DDS into our existing methodology, a streamlined transition that enhances the efficiency and effectiveness of this approach is ensured. This strategic augmentation empowers us to tackle the complexities of the main task with precision and agility, leveraging the inherent adaptability and robustness of DDS.

\section{Experiments}
\label{sec:results}

The experimental section of this paper explores two main tasks. First, the DDS model is attached to a state-of-the-art clustering technique, to show how its inclusion can dramatically increase its performance; Second, DDS is used to learn a representation to be used by an agent in a World Model. Results show that DDS increases both the agent performance and the visual quality of the reconstructions. All algorithms, scripts, and results are accessible via GitHub.\footnote{URL will be included upon acceptance.} All experiments were conducted on workstation with a NVIDIA GeForce RTX 3090Ti GPU (24GB memory), a 12th Gen Intel(R) Core(TM) i7-12700K processor (20 cores), and 64GB RAM.

% Aquí ía o de data compression. Agora metemos o World Model como 4.2 despois de clustering
% \input{old4_1-data-compression}

\subsection{Clustering}
\label{subsec:clustering}

First, we explored the use of DDS in a clustering scenario. Our model was integrated into a state-of-the-art architecture without any calibration or hyperparameter tuning.

Over recent years, contrastive learning has boosted the quality of unsupervised image clustering. Techniques like Contrastive Clustering (CC) \cite{li2021contrastive} or its upgrade, the Twin Contrastive Clustering (TCL) \cite{li2022twin} achieved remarkable results nearing those obtained by supervised techniques. However, some of these works perform image resizing, changing the initial \texttt{CIFAR-10} image size $32 \times 32$ to a much bigger one ($224 \times 224$ for TCL, for instance). Therefore, it makes no sense to perform a dynamic feature selection over an artificially enlarged image.

To make a fair experiment, DDS should be attached to a clustering model that does not require image resizing to achieve state-of-the-art results. Over these solutions, ProPos \cite{huang2022learning} stands out from the rest. Therefore, the DDS module $g(\cdot; \mathbf{\Theta}_S)$ was attached to the ProPos model $f(\cdot; \mathbf{\Theta}_U)$, without any hyperparameter tuning. The same training procedure presented in \cite{huang2022learning} was applied, although the number of epochs was increased to $2000$ for \texttt{Tiny-Imagenet} and $4000$ in the other datasets. This was done because ProPos’ training plateaus long before it reaches the tested number of epochs, and its results do not change significantly for longer training. In contrast, our approach continues to improve its performance over time. The U-Net model (with $C = 16$) was used as the DDS architecture, and two values of $M$ were tested: 10\% and 25\% of the total image pixels. As in \cite{huang2022learning}, a ResNet-18 architecture was used as the backbone for the Tiny-Imagenet dataset, whereas a ResNet-34 was used for the others. 

As baselines, we used  SCAN \cite{van2020scan}, NMM \cite{dang2021nearest}, GCC \cite{zhong2021graph}, TCC \cite{shen2021you}, SimSiam \cite{chen2020simple}, BYOL \cite{grill2020bootstrap}, IDFD \cite{tao2020clustering} and PCL \cite{li2020prototypical} in addition to the aforementioned CC, TCL, and ProPos. Table \ref{tab:clustering} shows the clustering results obtained over four different datasets: \texttt{CIFAR-10}, \texttt{CIFAR-20} \cite{krizhevsky2009learning}, \texttt{ImageNet-10} \cite{ILSVRC15} and \texttt{ImageNet-Dogs} \cite{KhoslaYaoJayadevaprakashFeiFei_FGVC2011}. The results show that DDS significantly reduced the number of input features without hindering the performance on small images. Furthermore, for larger images(\texttt{ImageNet-10} and \texttt{ImageNet-Dogs} models use $224 \times 224$ images), DDS achieved state-of-the-art results when selecting, per sample, only one-quarter of their features. For the \texttt{Tiny-ImageNet} dataset (using $64 \times 64$ images) DDS reaches new state-of-the-art results and surpasses ProPos in the three metrics by a margin greater than $15\%$. Moreover, doubling the number of epochs increases this improvement to an impressive $80\%$ in the ARI score (54.6 NMI, 40.3 ACC and 26.3 ARI), while it does not affect the performance of the original ProPos. These results suggest that removing unnecessary information, such as background details, can enormously help the clustering algorithm.

\begin{table*}[t]
\caption{Clustering results on four different datasets. The best and second-best results are shown in bold and underlined, respectively. The DDS+ProPos model maintains similar results to those obtained by the original ProPos, even when using much fewer selected features.}
\label{tab:clustering}
\begin{center}
\resizebox{.99\linewidth}{!}{
\begin{tabular}{|l|ccc|ccc|ccc|ccc|ccc|}
\hline
\textbf{Method} & \multicolumn{3}{c|}{\texttt{CIFAR-10}} & \multicolumn{3}{c|}{\texttt{CIFAR-20}} & \multicolumn{3}{c|}{\texttt{ImageNet-10}} & \multicolumn{3}{c|}{\texttt{ImageNet-Dogs}} & \multicolumn{3}{c|}{\texttt{Tiny-Imagenet}} \\ \hline
 & \textbf{NMI} & \textbf{ACC} & \textbf{ARI} & \textbf{NMI} & \textbf{ACC} & \textbf{ARI} & \textbf{NMI} & \textbf{ACC} & \textbf{ARI} & \textbf{NMI} & \textbf{ACC} & \textbf{ARI} & \textbf{NMI} & \textbf{ACC} & \textbf{ARI} \\ \hline
%IIC \cite{ji2019invariant} & 51.3 & 61.7 & 41.1 & - & 25.7 & - & - & - & - & - & - & - \\
%DCCM \cite{wu2019deep} & 49.6 & 62.3 & 40.8 & 28.5 & 32.7 & 17.3 & 60.8 & 71.0 & 55.5 & 32.1 & 38.3 & 18.2 \\
%PICA \cite{huang2020deep} & 56.1 & 64.5 & 46.7 & 29.6 & 32.2 & 15.9 & 78.2 & 85.0 & 73.3 & 33.6 & 32.4 & 18.2 \\
SCAN & 79.7 & 88.3 & 77.2 & 48.6 & 50.7 & 33.3 & - & - & - & - & - & - & - & - & - \\
NMM & 74.8 & 84.3 & 70.9 & 48.4 & 47.7 & 31.6 & - & - & - & - & - & - & - & - & -\\
CC & 70.5 & 79.0 & 63.7 & 43.1 & 42.9 & 26.6 & 85.9 & 89.3 & 82.2 & 44.5 & 42.9 & 27.4 & 34.0 & 14.0 & 7.1\\
%MiCE \cite{tsai2020mice} & 73.7 & 83.5 & 69.8 & 43.6 & 44.0 & 28.0 & - & - & - & 42.3 & 43.9 & 28.6 \\
GCC & 76.4 & 85.6 & 72.8 & 47.2 & 47.2 & 30.5 & 84.2 & 90.1 & 82.2 & 49.0 & 52.6 & 36.2 & 34.7 & 13.8 & 7.5\\
TCL & 81.9 & 88.7 & 78.0 & 57.9 & 53.1 & 35.7 & 87.5 & 89.5 & 83.7 & 62.3 & 64.4 & 51.6 & - & - & -\\
TCC & 79.0 & 90.6 & 73.3 & 47.9 & 49.1 & 31.2 & 84.8 & 89.7 & 82.5 & 55.4 & 59.5 & 41.7 & - & - & -\\
%MoCo \cite{he2020momentum} & 66.9 & 77.6 & 60.8 & 39.0 & 39.7 & 24.2 & - & - & - & 34.7 & 33.8 & 19.7 \\
SimSiam & 78.6 & 85.6 & 73.6 & 52.2 & 48.5 & 32.7 & 83.1 & 92.1 & 83.3 & 58.3 & 67.4 & 50.1 & 35.1 & 20.3 & 9.4\\
BYOL & 81.7 & 89.4 & 79.0 & 55.9 & 56.9 & 39.3 & 86.6 & 93.9 & 87.2 & 63.5 & 69.4 & 54.8 & 36.5 & 19.9 & 10.0\\
IDFD & 71.1 & 81.5 & 66.3 & 42.6 & 42.5 & 26.4 & 89.8 & 95.4 & 90.1 & 54.6 & 59.1 & 41.3 & - & - & -\\
PCL & 80.2 & 87.4 & 76.6 & 52.8 & 52.6 & 36.3 & 84.1 & 90.7 & 82.2 & 44.0 & 41.2 & 29.9 & 35.0 & 14.0 & 7.1\\
ProPos & \textbf{88.6} & \textbf{94.3} & \textbf{88.4} & \underline{60.6} & \textbf{61.4} & \underline{45.1} & 89.6 & 95.6 & 90.6 & 69.2 & 74.5 & 62.7 & 40.5 & 25.6 & 14.3\\ \hline
DDS(10\%)+ProPos & \multirow{1}{*}{80.2} & \multirow{1}{*}{86.3} & \multirow{1}{*}{75.7} & \multirow{1}{*}{54.4} & \multirow{1}{*}{50.5} & \multirow{1}{*}{37.2} & \multirow{1}{*}{\textbf{91.8}} & \multirow{1}{*}{\textbf{96.7}} & \multirow{1}{*}{\textbf{92.8}} & \multirow{1}{*}{\underline{74.4}} & \multirow{1}{*}{\underline{76.0}} & \multirow{1}{*}{\underline{66.5}} & \underline{43.8} & \underline{27.1} & \underline{16.0}\\
%& & & & & & & & & & & & \\
DDS(25\%)+ProPos & \multirow{1}{*}{\underline{87.6}} & \multirow{1}{*}{\underline{93.9}} & \multirow{1}{*}{\underline{87.2}} & \multirow{1}{*}{\textbf{62.2}} & \multirow{1}{*}{\underline{58.4}} & \multirow{1}{*}{\textbf{46.6}} & \multirow{1}{*}{\underline{90.8}} & \multirow{1}{*}{\underline{96.2}} & \multirow{1}{*}{\underline{91.7}} & \multirow{1}{*}{\textbf{75.9}} & \multirow{1}{*}{\textbf{78.6}} & \multirow{1}{*}{\textbf{69.5}} & \textbf{47.0} & \textbf{30.5} & \textbf{18.9}\\
\hline
\end{tabular}
}
\end{center}
\end{table*}

% \begin{table}[ht]
% \centering
% \caption{Clustering Results (\%) on \texttt{Tiny-ImageNet}. The best and second-best results are shown in bold and underlined, respectively. The Long version was trained by doubling the number of epochs.}
% \label{tab:tiny-imagenet}
% \begin{tabular}{|l|c|c|c|}
% \hline
% \textbf{Method} & \textbf{NMI} & \textbf{ACC} & \textbf{ARI} \\ \hline
% %DCCM \cite{wu2019deep} & 22.4 & 10.8 & 3.8 \\
% %PICA \cite{huang2020deep} & 27.7 & 9.8 & 4.0 \\
% CC \cite{li2021contrastive} & 34.0 & 14.0 & 7.1 \\
% GCC \cite{zhong2021graph} & 34.7 & 13.8 & 7.5 \\
% % MoCo \cite{he2020momentum} & 34.2 & 16.0 & 8.0 \\
% PCL \cite{li2020prototypical} & 35.0 & 15.9 & 8.7 \\
% SimSiam \cite{chen2020simple} & 35.1 & 20.3 & 9.4 \\
% BYOL \cite{grill2020bootstrap} & 36.5 & 19.9 & 10.0 \\
% ProPos \cite{huang2022learning} & 40.5 & 25.6 & 14.3 \\ \hline
% DDS(10\%) + ProPos & 43.8 & 27.1 & 16.0\\
% DDS(25\%) + ProPos & \underline{47.0} & \underline{30.5} & \underline{18.9}\\
% DDS(25\%) + ProPos (Long) & \textbf{54.6} & \textbf{40.3} & \textbf{26.3}\\
% \hline
% \end{tabular}
% \end{table}

\subsection{World models}
\label{world-models-introduction}
The reinforcement learning problem of constructing agents that learn to interact with a dynamic environment has recently been tackled using world models. These models serve as generative frameworks for simulating environments internally and enable agents to predict and act based on imagined scenarios rather than relying on direct interactions with their surroundings. The usual architecture of these agents~\cite{DBLP:journals/corr/abs-1803-10122} relies on a separately trained vision model to construct a compact and structured latent representation of the environment which the agent then uses to determine the actions to be taken. A complete description is provided in Appendix~\ref{appendix:world-models}.

% These latent representations are mainly used to predict future states of the environment. However, they can also be leveraged to generate visualizations of these new states. Lately, this has gained a lot of interest in the field of game generation, using world models as real-time game engines. This generative capability enables the creation of unique procedural worlds based on the Player's actions. 

The vision model was originally solved by using a Variational Autoencoder (VAE) \cite{kingma2013auto}, which is known to produce reconstructions that are blurry \cite{tomczak2018vae}.To provide a more challenging and recent comparison, we also evaluate against a Masked Autoencoder (MAE)~\cite{he2021maskedautoencodersscalablevision} based vision model.

We propose to maintain the architecture in~\cite{DBLP:journals/corr/abs-1803-10122} but enhance the Vision model (V), which obtains the latent representations of the environment, with our DDS model. The goal is to improve the efficacy of the obtained representations both in terms of agent performance and the visual quality of the generated images. The DDS architecture introduced in Fig.~\ref{fig:architecture} is adapted by introducing a VAE to learn latent interpretations for the agent to use in the process of learning to reconstruct the mask selected by DDS. Changing the goal of the VAE to reconstructing the masked input ($\textbf{mask}=g(\mathbf{X}) \circ \mathbf{X})$) instead of the original input ($\textbf{X}$) simplifies the task of the VAE by removing noise, which allows it to obtain better representations. Fig. \ref{fig:model_architecture} describes the new architecture.

\begin{figure*}[t]
\centering
\includegraphics[width=1.0\textwidth]{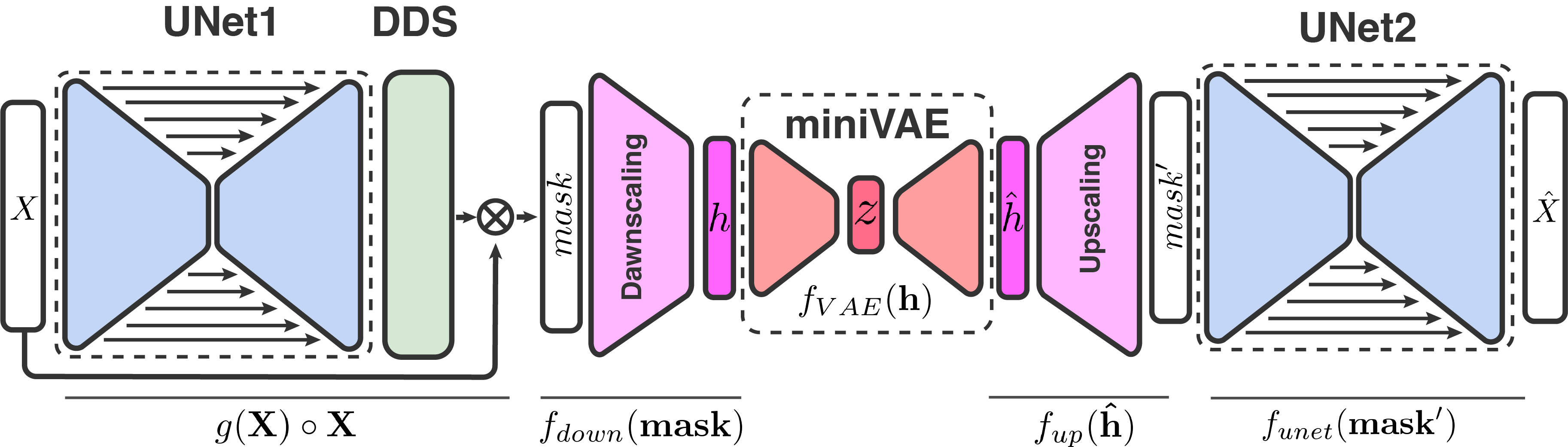}
\caption{Adaptation of the DDS architecture to the World Model problem. The previously presented DDS architecture (green and blue) is augmented to yield a structured latent space with the addition of a Variational Autoencoder (red) that aims to reconstruct the masked inputs (i.e. the relevant features of the input image). The training procedure is divided in two steps: (1) the DDS is trained to learn to select the relevant features of each image without the \textit{VAE} section (i.e. $\mathbf{h}=\mathbf{\hat{h}}$), and then (2) the \textit{VAE} is trained to compute the $\mathbf{\hat{h}}$ reconstruction of $\mathbf{h}$.}
\label{fig:model_architecture}
\end{figure*}

The training procedure has two steps: First, everything but the VAE is trained. To train as much of the network as possible in this first stage, the VAE is split into three functions $\textbf{mask'}=f_{up}(f_{VAE}(f_{down}(\textbf{mask})))$ and $f_{VAE}$ is removed in this training step. The function to be learned is, therefore, $f_{unet}(f_{up}(f_{down}(g(\mathbf{X}) \circ \mathbf{X})))$. The main idea is to learn to select the relevant features, as well as a lower resolution representation $\mathbf{h}=f_{down}(g(\mathbf{X}) \circ \mathbf{X})$ representing key points of the image structure and texture. The primary optimization objective is to minimize a pixel-level reconstruction loss, typically implemented as the mean squared error (MSE) between the input image \(\mathbf{x}\) and its reconstruction \(\mathbf{\hat{x}}\). The second step begins once both U-Nets ($g$ and $f_{unet}$) and the downscaling/upscaling modules ($f_{down}$ and $f_{up}$) have been fully trained and frozen. We train the \emph{VAE} to reconstruct $\textbf{mask}=(g(\mathbf{X}) \circ \mathbf{X})$. In this process, it learns a more compact latent vector \(\mathbf{z} \in \mathbb{R}^{d}\). Unlike standard VAEs \cite{kingma2022autoencodingvariationalbayes} that directly compress high-resolution inputs (often requiring large, deep networks), our VAE leverages the frozen $f_{down}$ and $f_{up}$, and trains only $f_{VAE}$. With this approach, the reconstruction loss approaches a \emph{perceptual loss} \cite{hou2017deep} without requiring supervised pre-trained models: it computes the MSE between the output of each layer of $f_{up}$ when applied to either the latent representation $\mathbf{h}$ and its reconstruction $\mathbf{\hat{h}}$. 

Finally, a known drawback of VAEs is that strict adherence to the \(\mathcal{N}(0,1)\) prior can lead to dimensional collapse: a few latent dimensions carry most of the information, while others remain dormant. To address this, we adopt the \emph{free bits} method \cite{DBLP:journals/corr/KingmaSW16}, which modifies the VAE loss to allow each dimension a small ``budget” of KL divergence.

Formally speaking, the VAE loss function is defined as:

\begin{equation}
\label{eq:mini_vae_loss_compact}
\begin{aligned}
\mathcal{L}_{\text{VAE}} 
&= \mathcal{L}_{\text{perceptual}} + \alpha \max\big(\lambda, D_{\mathrm{KL}}(q(\mathbf{z}|\mathbf{h}) \parallel p(\mathbf{z}))\big), \\
\mathcal{L}_{\text{perceptual}} 
&= \dfrac{1}{|f_{\mathrm{up}}|} \sum_{k=1}^{|f_{\mathrm{up}}|} \| f_{\mathrm{up}}(\hat{\mathbf{h}})_k - f_{\mathrm{up}}(\mathbf{h})_k \|^2,
\end{aligned}
\end{equation}
$f_{\mathrm{up}}(\mathbf{X})_k$ is the output of the $k$-th layer of the upscaling module, and $\alpha=1$ and $\lambda = 0.02$ are some regularization terms for the \emph{free bits penalization.}

In our study, we followed the same procedure for dataset collection, model training, and evaluation, ensuring that our experiments remained consistent with the original \cite{DBLP:journals/corr/abs-1803-10122} methodology, ensuring that any observed differences in performance and image fidelity could be attributed solely to the changes in the latent space construction method. We use the same latent dimension size (\(\textbf{z}\in \mathbb{R}^{32}\)) for the original VAE, our DDS+VAE, and the MAE+VAE baseline. The number of parameters for the vision models is: 4,348,547 for the original VAE \cite{DBLP:journals/corr/abs-1803-10122}, 6,686,464 for the MAE+VAE baseline, and notably, 4,039,089 for our DDS+VAE approach. This highlights that our method can achieve substantial gains in reconstruction fidelity and downstream RL performance (as shown later) with a more parameter-efficient model compared to both the original VAE and the more complex MAE+VAE.

\textbf{Image quality experiment:} \quad
To assess the quality of the reconstructed images, we measured the reconstruction error of the different models. The results shown in Table \ref{tab:mse_world_model} highlight that our DDS+VAE approach yields a noticeably lower reconstruction error than both the baseline VAE and the MAE+VAE when reconstructing. Specifically, DDS+VAE (with 4\% or 8\% pixel selection) achieves significantly lower MSE than VAE (0.00165) and MAE+VAE (0.00220). This indicates that DDS successfully pinpoints crucial image regions — capturing objects, textures, and boundaries — thus enabling the model to more accurately reconstruct the environment. For a qualitative illustration of the intermediate representations, input reconstructions, and generated dream sequences for both \texttt{CarRacing-v3} \citep{gymnasium2025} and \texttt{SuperMarioBros-v0} \citep{gym-super-mario-bros} environments, please refer to Appendix~\ref{appendix:visualization}.

\begin{table*}[ht]
\centering
\caption{Reconstruction MSE over 20.000 inference dataset from the \texttt{CarRacing-v3} environment using different $M$ values. DDS configuration is the same as provided for Fig. \ref{fig:model_architecture}, except for the $M$ values included in each row.}
\label{tab:mse_world_model}
\vspace{0.4em} 
\resizebox{1.0\columnwidth}{!}{
\begin{tabular}{|p{6cm}|c|c|c|c|c|c|}
\hline
\textbf{Vision model architecture} & \textbf{2\%} & \textbf{4\%} & \textbf{8\%} & \textbf{16\%} & \textbf{32\%} & \textbf{100\%} \\ \hline
Baseline (VAE) & - & - & - & - & - &  0.00165\\
Baseline (MAE+VAE) & - & - & - & - & - &  0.00220\\

\hline
% DDS w/o Residual Links & 0.00328 & 0.00164 & 0.00188 & 0.00228 & 0.00352 \\
% \hline
% DDS Hard Sigmoid ($\zeta = 1.1$, $\gamma = -0.1$) & 0.00060 & 0.00095 & 0.00089 & 0.00099 & 0.00119 \\
% \hline
% DDS classic Binary Concrete ($\kappa = 1.$) & 0.00134 & 0.00039 & 0.00042 & 0.00047 & 0.00051\\
% \hline
% DDS w/o Binary Concrete ($\kappa = 0.$) & 0.00157 & 0.00099 & 0.00224 & 0.00063 & 0.00083\\
% \hline
% DDS w/o Dynamic $M$ ($\epsilon = 0$)  & 0.00086 & 0.00094 & 0.00148 & 0.00090 & 0.00101\\
% \hline
Proposed (DDS + VAE) & 0.00134 & \textbf{0.00039} & 0.00042 & 0.00047 & 0.00051 & - \\
\hline
\end{tabular}}
\end{table*}

After training the Memory model using our new latent representation \(\mathbf{z}\), we can generate \emph{dream sequences} that simulate future states of the environment without direct interaction. 
%Specifically, we begin by encoding a \emph{real} image \(\mathbf{x}^{(0)}\) from the environment into its corresponding latent vector \(\mathbf{z}_t^{(0)}\) using the trained encoder pathway. Subsequently, rather than relying on additional real observations, we recursively predict future latent states \(\mathbf{z}_{t+1}^{(0)}\) using the Memory model outputs from previous time-steps as described in \cite{DBLP:journals/corr/abs-1803-10122}. 
Figure~\ref{fig:dream_combined} shows the contrast between the baseline's results and our method's. The images \emph{dreamed} by the baseline model are often blurry and show artifacts that make them visually dissimilar from the real environment. Furthermore, although the predicted dynamics remain loosely consistent with real motions, these generated states sometimes diverge significantly from the appearance and dynamics of the actual environment (particularly when the agent executes sharp turns or accelerates rapidly). In contrast, our approach yields dream sequences that retain more detail and exhibit smoother transitions between frames. 
% While these simulations are still an approximation and cannot flawlessly replicate real physics, they better capture the look and feel of the environment. For instance, road boundaries, vehicle positioning, and background textures evolve more in line with what one would observe in the real environment, especially when the dream is driven by actions from a human player. 
This enhanced internal simulation capability can facilitate more effective training of downstream policies, as shown in the next experiment.

We evaluated the proposed DDS+VAE model's ability to generate dream sequences compared to the VAE and our MAE+VAE baselines. We generated and analyzed four sets of sequences for each environment (\texttt{CarRacing-v3} and \texttt{SuperMarioBros-v0}), each containing 100 sequences with 1,000 frames. The first set consists of real environment sequences, the subsequent sets comprise dream sequences generated using the VAE, MAE+VAE, and our proposed DDS+VAE model, respectively.

\begin{table}[htbp]
    \centering
    \caption{FID and FVD scores for different models over 100 sequences (lower is better). The DDS+VAE results are for $M=4\%$ feature selection.}
    \label{tab:quality_coherence}
    \resizebox{0.6\columnwidth}{!}{
    \begin{tabular}{llccc}
        \toprule
        \textbf{Dataset} & \textbf{Metric} & \textbf{VAE} & \textbf{MAE+VAE} & \textbf{DDS+VAE} \\
        \midrule
        \multirow{2}{*}{CarRacing-v3} & FID & 59.46 & 54.84 & \textbf{25.35} \\
                                      & FVD & 239 & 312 & \textbf{176} \\
        \midrule
        \multirow{2}{*}{SuperMarioBros-v0} & FID & 64.06 & \textbf{60.21} & 61.08 \\
                                           & FVD & 412 & 465 & \textbf{338} \\
        \bottomrule
    \end{tabular}}
\end{table}

To assess the quality of the generated sequences, we used two standard metrics: Fréchet Inception Distance (FID)\cite{DBLP:journals/corr/HeuselRUNKH17}, which measures image distribution similarity using features from a pre-trained Inception V3 model, and Fréchet Video Distance (FVD)\cite{DBLP:journals/corr/abs-1812-01717}, which extends FID to evaluate temporal coherence using a pre-trained I3D model. Lower FID and FVD scores indicate higher visual fidelity and better sequence consistency, respectively. As shown in Table~\ref{tab:quality_coherence}, for the \texttt{CarRacing-v3} environment, the DDS+VAE model achieved significantly lower FID and FVD scores compared to both the VAE and MAE+VAE baselines, demonstrating superior image reconstruction quality and enhanced temporal coherence. For the \texttt{SuperMarioBros-v0} environment, our DDS+VAE model attained the best FVD score, indicating superior temporal coherence. While MAE+VAE achieved a slightly lower FID in this environment, our DDS+VAE remained highly competitive and presents a better overall profile, especially when considering its parameter efficiency and strong FVD. These results demonstrate the robustness and effectiveness of our DDS+VAE approach across different visual domains and against strong baselines.

\begin{figure*}[htpb]
\centering
    \begin{subfigure}
        \centering
        \includegraphics[width=\textwidth]{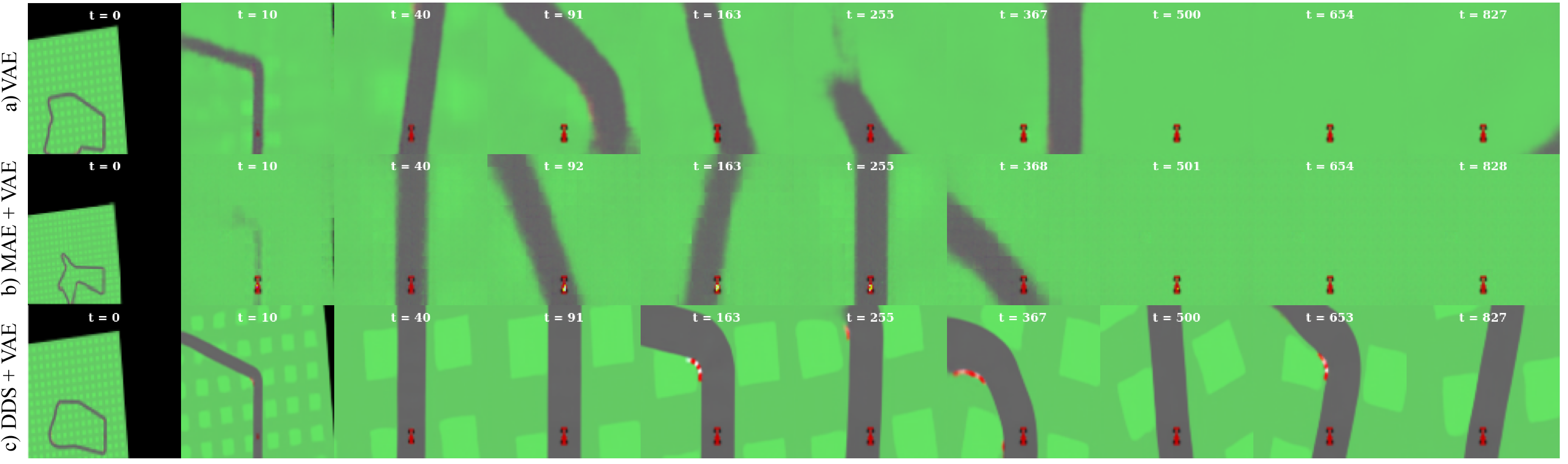}
    \end{subfigure}
    \caption{Comparison of dream sequence generation by the World Model. a) Original Vision model with a VAE architecture. b) Proposed (VAE+DDS with M=4\%) as Vision model.}
    \label{fig:dream_combined}
\end{figure*}

% Formally, at each step \(t\), the MDN-RNN updates its hidden state \(\mathbf{h}_t\) according to:
% \[
% \mathbf{h}_{t+1} \sim \mathcal{P}(\mathbf{h}_{t+1} \mid \mathbf{a}_t, \mathbf{z}_t, \mathbf{h}_t),
% \]
% where \(\mathbf{a}_t\) is the agent’s (or human player’s) action at time \(t\). The MDN-RNN then produces a probability distribution for \(\mathbf{z}_{t+1}\) conditioned on \(\mathbf{z}_t\), \(\mathbf{a}_t\), and \(\mathbf{h}_t\). Sampling from this distribution yields the next latent state \(\mathbf{z}_{t+1}\). Finally, \(\mathbf{z}_{t+1}\) is passed through the \emph{decoder} stage of our model (Upscaling CNN + U-Net) to generate the corresponding “dreamed” image \(\mathbf{\hat{x}}_{t+1}\). This output then replaces any real observation \(\mathbf{x}_{t+1}\) that would otherwise come from the environment, allowing us to “play” with the internal simulation of the world model.

% \newpage
\textbf{Agent performance:} \quad
Finally, we examine the agent's performance when deciding its policy using the latent representations obtained by the original and proposed Vision models (DDS+VAE with M=4\%) . The evaluation, conducted over 100 episodes for each controller, highlights the the new model's improvements. The original 2018 architecture achieved an average reward of 734.96 ± 162.75, whereas the new model reached 818.58 ± 147.05, confirming its enhanced performance.

% Figure~\ref{img:controller_performance} shows that the latent representations lead to superior performance, although the agent learns slightly slower. 

% \begin{figure}[ht]
% \vskip 0.2in
% \begin{center}
% \centerline{\includegraphics[width=.6\textwidth]{images/cma_es.png}}
% \caption{Average reward of the best-performing individual in the population. Two curves are shown: one representing the evolution of the original 2018 world model architecture and the other showing the improvement brought by the new vision.
% % The updated model exhibits a slower initial increase in reward, as the training begins with a more complex representation. However, as training progresses, it surpasses the original model's performance, demonstrating a slight but consistent improvement in rewards. 
% % The final evaluation, conducted over 100 episodes for each controller, highlights the advancements of the new model. The original 2018 architecture achieved an average reward of 734.96 ± 162.75, whereas the new model reached 818.58 ± 147.05, confirming its enhanced performance.
% }
% \label{img:controller_performance}
% \end{center}
% \vskip -0.2in
% \end{figure}

\subsection{Ablation Study}
\label{sec:ablation}
Multiple configurations were tested to determine both the correct model hyperparameters and the limitations of the architecture. We measured performance changes for different configurations using the \texttt{CIFAR-10} dataset.

\begin{table*}[ht]
\centering
\caption{Reconstruction MSE over \texttt{CIFAR-10} retaining different $M$ features.}
\label{tab:ablation}
\resizebox{0.75\columnwidth}{!}{
\begin{tabular}{|p{6cm}|c|c|c|c|c|}
\hline
\textbf{M} & \textbf{64} & \textbf{128} & \textbf{256} & \textbf{512} & \textbf{1024} \\ \hline
Naive AE & 0.01820 & 0.01264 & 0.00806 & 0.00526 & 0.00449 \\
\hline
DDS w/o Residual Links & 0.03129 & 0.01804 & 0.01302 & 0.00844 & 0.00817 \\
\hline
DDS Hard Sigmoid ($\zeta = 1.1$, $\gamma = -0.1$) & 0.01838 & 0.00861 & 0.00397 & 0.00126 & 0.00030 \\
\hline
DDS classic Binary Concrete ($\kappa = 1.$) & 0.03759 & 0.01771 & 0.01415 & 0.00320 & 0.00064\\
\hline
DDS w/o Binary Concrete ($\kappa = 0.$) & 0.01641 & 0.01145 & 0.00484 & 0.00148 & 0.00024\\
\hline
DDS w/o Dynamic $M$ ($\epsilon = 0$)  & 0.01609 & 0.00778 & 0.00385 & 0.00088 & 0.00017 \\
\hline
\rowcolor[gray]{.9} DDS only for training & 0.17394 & 0.18085 & 0.18379 & 0.22682 & 0.19937 \\
\hline
DDS & 0.01636 & 0.00945 & 0.00469 & 0.00119 & 0.00025\\
\hline
\end{tabular}}
\end{table*}

Table \ref{tab:ablation} shows the results of different MSE reconstruction configurations when using an autoencoder-like configuration (see Fig \ref{fig:architecture}). The first and most important insight from this experiment is that the most advanced architecture of the model plays a pivotal role in the accuracy of the solution. DFS outperforms the naive AutoEncoder by preserving spatial information while maintaining a compact representation with fewer variables. This enables the use of advanced networks with residual links, whereas traditional feature selection methods require a latent representation comparable in size to the original input to leverage such architectures. Another insight is that using the hard sigmoid configuration provided in \cite{balin2019concrete} drops the performance by a slight margin, especially when selecting low $M$ values.

This phenomenon is caused by the zero gradient obtained when the feature importance is cropped (values higher than 1 or lower than 0) in early training stages, causing a limitation in the ability of the model to adapt. This validates the change proposed in Section \ref{sec:DDS_implementation}.

Lastly, the use of the Binary Concrete distribution was tested. Using the classic configuration ($\kappa = 1$) resulted in a drop in performance. Since the reconstruction task is a regression problem, it was found that high variations in the output of the DDS model result in the inability of the reconstruction model ($\Theta_{U}$) to achieve good generalization. However, when setting $\kappa = 0$ the results are comparable with the ones obtained by our default configuration, suggesting that, for this reconstruction task, the Binary Concrete distribution is useless. A similar effect occurs when removing the dynamic $M$ variation provided in Eq. \ref{eq:dynamic_M} ($\epsilon = 0$). In this case, the results suggest that using it may be counterproductive.

\begin{wrapfigure}[13]{r}{0.35\textwidth}
    \vspace{-1.0em} 
    \centering
    \includegraphics[width=0.35\textwidth]{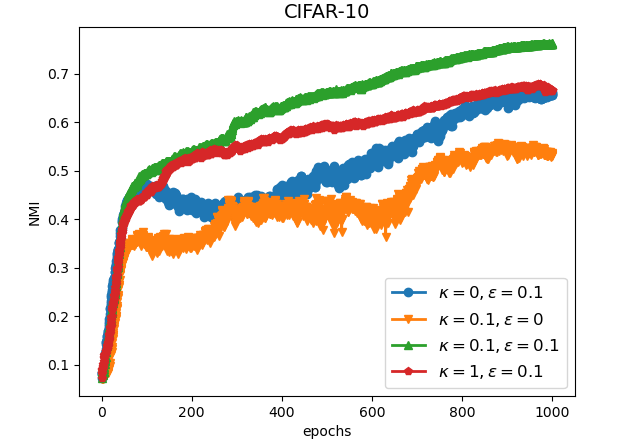}
    \caption{
        DDS(10\%) + ProPos clustering NMI over CIFAR-10, using a ResNet-18 as backbone.
    }
    \label{fig:ablation}
\end{wrapfigure}

In contrast, Fig. \ref{fig:ablation} shows the NMI clustering results when training DDS(10\%) + ProPos over $1000$ epochs, using a ResNet-18 as the backbone. In this case, the default configuration achieves the best results. The result when removing the dynamic $M$ variation is of special interest, since not only obtain the worst results, but also provide a more unstable output. The same problem arises when no Binary Concrete distribution is included in the training procedure ($\kappa = 0$).

Since the aim of this work to provide a single set of useful hyperparameters, no matter which type of problem needs to be solved, the default configuration $\kappa = 0.1$, $\epsilon = 0.1$, $\zeta = 1$, $\gamma = 0$ obtains good solutions in both experiments. However, the results can be improved if it is carefully tuned for a specific task.
%It is specially interesting to note that, even when only selecting the 10\% of the input features, the score obtained is higher than state-of-the-art techniques of 4 years ago. These results suggest that DDS can be attached to any state-of-the-art model with almost no clustering performance loss.

\section{Conclusion}
\label{sec:conclusion}

This paper presents a general recipe for Dynamic Feature Selection in unsupervised scenarios. The presented DDS module can be attached to the input of any architecture tackling an unsupervised task. The module consists of an autoencoder-like architecture that outputs the selection of, at most, $M$ relevant features with their respective score, with $M$ being a fixed parameter tuned by the operator. Our experiments show that DDS can perform data compression with better results than the alternatives, even when accounting that extra memory is needed for saving the feature selection indexes. This improvement is caused by two factors: first, the information provided by the selected features is extremely discriminative; and second, DDS allows the use of more complex downstream architectures since the input data structure is always preserved. Finally, we show that the DDS architecture can be attached to two different architectures, tackling very different problems, improving the performance in both cases.

% \subsection{Limitations}

\textbf{Limitations:} When using the DDS module on an existing architecture, the training procedure must be adapted. The number of epochs often needs to be doubled (compared to training the same architecture without the DDS module).

It is worth noting that the output of the DDS architecture is not forced to be binary. In fact, preliminary studies show that the feature importance score rarely reaches the perfect score of 1. This reduces explainability, as the stored compressed data is modified from the original. However, this can be solved by storing the input data and their importance scores separately, although this almost doubles the memory requirements. If a small memory footprint is a requirement, the DDS output can be forced to be binary by introducing more restrictions into the model, although initial tests show significant degradation in performance. %, only a small fraction of techniques were used to force this extra restriction.

%It is also interesting to remark that, in the clustering section, some experiments were performed using high $M$ values ($0.5 F$ and $F$) over the CIFAR-10 dataset. As good results were obtained when using the ResNet-18 architecture as backbone, erratic results were obtained when switching to the ResNet-34 used in the experiments provided in Table \ref{tab:clustering}. However, this problem was reduced when some $l_0$ regularization was introduced into the DDS output, although the results are inferior to the ones provided by DDS(25\%).

%\subsection{Future Work}

\textbf{Future Work:} We plan to take advantage of the ability of the DDS architecture to preserve the input data structure. Novel contrastive learning loss functions can be derived from this idea, as the selected pixels of an image should be similar no matter how many geometrical operations are applied to perform data augmentation over them. Finally, it would also be interesting to extend this architecture to the supervised scenario, as in previous DFS algorithms.

\section*{Impact Statement}
\label{sec:impact}
Our Dynamic Feature Selection framework enhances the interpretability of machine learning models by identifying the most relevant components of each input sample. This capability can deepen trust in data-driven solutions, as decision pathways become more transparent and directly traceable to their most essential inputs. In addition, the method’s flexible design facilitates integration with diverse unsupervised tasks, from clustering to generative modeling, broadening its potential impact in both academic research and real-world deployments.

\bibliographystyle{unsrtnat}
\bibliography{references}

\clearpage
\appendix

\section{Intuition Behind The Idea}
\label{sec:proof}

In this section we will discuss the intuition behind the use of the DFS, and why removing irrelevant features helps to improve an unsupervised model's performance when attached to it.

\subsection{Effect of Small and Irrelevant Inputs in Neural Networks}

Let \( f(\cdot): \mathbb{R}^n \to \mathbb{R}^m \) be a neural network function approximating some unsupervised task. Assume the input is \( x = (x_1, x_2, \dots, x_n) \in \mathbb{R}^n \), and we partition it as:

\begin{equation}
    x = (x_{\text{rel}}, x_{\text{irr}})
\end{equation}

where:

\begin{itemize}
    \item \( x_{\text{rel}} \in \mathbb{R}^r \): Relevant features
    \item \( x_{\text{irr}} \in \mathbb{R}^{n - r} \): Irrelevant (noisy or uninformative) features
\end{itemize}

Let $\epsilon^j = (0, \dots, 0, \underbrace{\epsilon}_j, 0, \ldots, 0 ) \in \mathbb{R}^n$ a one-hot vector where $\epsilon^j_j \approx 0$ and $x^{\text{rel}} = (x_{\text{rel}}, 0)$.

\vspace{.5cm}

\begin{lemma}
    \label{lemma:1_1}
    In deep networks, could $\exists ~i \in [n-r, n] \quad | \quad \| f(x^{\text{rel}} + \epsilon^i) - f(x^{\text{rel}}) \| \gg 0$.
\end{lemma}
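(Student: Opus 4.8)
The statement is an existence claim (note the phrasing ``could $\exists$''), so it suffices to exhibit one deep network $f$, one irrelevant index $i$, and one small $\epsilon$ for which the single-coordinate perturbation produces a large output change. The mechanism I would exploit is amplification by depth: writing the first-order expansion $f(x^{\text{rel}} + \epsilon^i) - f(x^{\text{rel}}) = \epsilon \, \tfrac{\partial f}{\partial x_i}(x^{\text{rel}}) + R(\epsilon)$, the $i$-th input-gradient factors by the chain rule as a product of $L$ per-layer Jacobians, and this product can grow exponentially in the depth $L$. Hence the product $\epsilon \cdot \|\partial f / \partial x_i\|$ need not vanish even though $\epsilon \approx 0$.

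Concretely, first I would fix an activation $\phi$ with derivative bounded below, $\phi'(t) \ge c > 0$ (leaky-ReLU suffices), and construct weight matrices $W_1, \dots, W_L$ that (i) copy the relevant block $x_{\text{rel}}$ forward unchanged and (ii) route the single irrelevant coordinate $i$ along a dedicated path whose per-layer multiplicative gain is some $\rho > 1$. Choosing the biases so that the pre-activations along this path sit strictly inside the linear (active) region of $\phi$ at the base point $x^{\text{rel}}$, the effective input-to-output gain along coordinate $i$ is at least $(c\rho)^L$, so $\|\partial f / \partial x_i(x^{\text{rel}})\| \gtrsim (c\rho)^L$. For a fixed small $\epsilon$ this can be made to equal any target magnitude $K$ by taking $L$ (or $\rho$) large, and then $\|f(x^{\text{rel}} + \epsilon^i) - f(x^{\text{rel}})\| \ge K \gg 0$, which is exactly the desired conclusion.

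The last step is to control the remainder $R(\epsilon)$ so that the first-order term genuinely dominates. Here I would use the piecewise-linear structure of the construction: as long as $\epsilon$ is small enough that the perturbed point $x^{\text{rel}} + \epsilon^i$ does not cross any activation kink along the amplifying path, $f$ is \emph{affine} on the segment $[x^{\text{rel}}, x^{\text{rel}} + \epsilon^i]$, so $R(\epsilon) = 0$ and the gain computation is exact. This reduces the whole argument to a linear-algebra statement about a product of matrices.

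The main obstacle is precisely this activation bookkeeping: with a standard ReLU the input-gradient is annihilated whenever any pre-activation on the path turns negative, so the delicate part is to engineer the biases (and the sign of the routed $\epsilon$) so that every neuron on the amplifying path stays active throughout the interval $[x^{\text{rel}}, x^{\text{rel}} + \epsilon^i]$. Keeping the path in the active regime is what guarantees both the lower bound $(c\rho)^L$ on the gain and the vanishing of the remainder; once this is secured, the exponential-in-depth amplification, and hence the conclusion, follow immediately.
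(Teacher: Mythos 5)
Your proposal is correct and follows essentially the same route as the paper's proof: both arguments exhibit exponential amplification of the near-zero irrelevant coordinate through a product of per-layer weights along a path kept in the active (linear) regime of the activation. Your version is in fact more careful than the paper's — which simply \emph{assumes} all ReLUs are active and the weights along coordinate $i$ dominate — since you explicitly construct the amplifying path, engineer the biases to keep it active on the whole segment, and use the resulting local affineness to kill the Taylor remainder.
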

\begin{proof}
    For the sake of simplicity, lets assume a neural network with ReLU activations. Even though ReLU is piecewise linear and zero for negative inputs, it can still allow small irrelevant inputs to influence the output significantly through deep weight paths.

    Assume a deep ReLU network:
    \begin{equation}
        f(x) = W_L \cdot \text{ReLU}(W_{L-1} \cdot \dots \cdot \text{ReLU}(W_1 x))
    \end{equation}

    Let \( x^{\text{rel}}_i + \epsilon^i_i \approx 0 \) and suppose that the ReLU activation is in the linear (active) regime for all layers. If the weights \( w_{li} \gg w_{lj}, ~\forall j \in [1, r] \), then:
    \begin{equation}
        f(x^{\text{rel}} + \epsilon^i) \propto \epsilon \cdot \prod_{l=1}^L w_{li} ~\gg~ x_j \cdot \prod_{l=1}^L w_{lj}, \quad \forall j \in [1, r]
    \end{equation}

    This means that even though \( x_i \) is very small, its effect can grow exponentially with depth if each layer multiplies the value by a large weight, shadowing the importance of the relevant features in the model's output.

\end{proof}

\begin{lemma}
    \label{lemma:1_2}
    In deep networks, could $\exists ~i \in [n-r, n] \quad | \quad \| \nabla_{x_i} f(x^{\text{rel}} + \epsilon^i)\| \gg \| \nabla_{x_j} f(x^{\text{rel}} + \epsilon^j)\|, ~\forall j \in [1, r]$.
\end{lemma}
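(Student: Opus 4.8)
The plan is to reuse the deep ReLU construction from Lemma~\ref{lemma:1_1} and to observe that the input--output gradient of $f$ inherits exactly the same multiplicative path structure that governed the output inequality there. First I would pin the network to operate in the active (linear) regime at both evaluation points $x^{\text{rel}} + \epsilon^i$ and $x^{\text{rel}} + \epsilon^j$, so that each layerwise activation indicator (a diagonal $0/1$ matrix $D_l$) is the identity on the neurons carrying the relevant path. In this regime $f$ is locally affine, and its Jacobian factorizes as $\nabla_x f = W_L D_{L-1} W_{L-1} \cdots D_1 W_1$, so the derivative with respect to a single coordinate is obtained by contracting this product against the corresponding basis vector.

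Next I would isolate the two coordinate gradients. Writing $e_k$ for the $k$-th standard basis vector, $\nabla_{x_k} f = W_L D_{L-1} \cdots D_1 W_1 e_k$; for a dedicated path routed through coordinate $k$ this collapses to the product $\prod_{l=1}^{L} w_{lk}$ (up to the fixed output-layer contraction). This is precisely the quantity that was controlled in Lemma~\ref{lemma:1_1}, now arising as a derivative rather than a function value. Invoking the same weight assumption $w_{li} \gg w_{lj}$ for all $j \in [1, r]$, I would then conclude
\begin{equation}
    \| \nabla_{x_i} f(x^{\text{rel}} + \epsilon^i) \| ~\propto~ \prod_{l=1}^{L} w_{li} ~\gg~ \prod_{l=1}^{L} w_{lj} ~\propto~ \| \nabla_{x_j} f(x^{\text{rel}} + \epsilon^j) \|,
\end{equation}
which exhibits the coordinate $i$ demanded by the statement. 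Because the lemma asserts mere existence, displaying one admissible weight configuration and base point is enough.

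The step I expect to be the main obstacle is ensuring that the activation mask is genuinely frozen in the active regime at both perturbed points simultaneously, since the gradient of a ReLU network is only given by the clean product of weights away from the activation kinks. I would address this by choosing $x^{\text{rel}}$ (together with the biases, if present) so that every relevant neuron carries a strictly positive pre-activation margin larger than $\epsilon \max_l |w_{li}|$; this guarantees an identical active mask at $x^{\text{rel}} + \epsilon^i$ and $x^{\text{rel}} + \epsilon^j$ and makes the factorized Jacobian, and hence the inequality above, valid. A secondary point worth a remark is that the two gradients are evaluated at different base points, but under the frozen mask both reduce to the same path products, so no additional argument is required.
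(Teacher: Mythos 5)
Your argument is essentially the paper's own proof: both compute the coordinate gradient of the deep ReLU network as the product of path weights $\prod_{l=1}^{L} w_{lk}$ times an activation indicator, and both conclude from the assumption $w_{li} \gg w_{lj}$ that the irrelevant coordinate's gradient dominates. The only difference is that you make explicit the condition (a positive pre-activation margin guaranteeing a frozen active mask at both base points) that the paper leaves implicit in its indicator $\mathbf{1}_{\text{ReLU active at all layers}}$, which is a welcome tightening rather than a different route.
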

\begin{proof}
    For the sake of simplicity, lets assume the same network defined in Lemma \ref{lemma:1_1}.

    The derivative of the output with respect to input \( x_i \) is:
    \begin{equation}
        \frac{\partial f}{\partial x_i} = \left( \prod_{l=1}^L w_{li} \right) \cdot \mathbf{1}_{\text{ReLU active at all layers}}
    \end{equation}
    
    Since we assume that \( w_{li} \gg w_{lr}, ~\forall r \in [1, r] \), then:
    \begin{equation}
        \frac{\partial f}{\partial x_i} = \left( \prod_{l=1}^L w_{li} \right) \cdot \mathbf{1}_{\text{ReLU active at all layers}} \gg \frac{\partial f}{\partial x_j} = \left( \prod_{l=1}^L w_{lj} \right) \cdot \mathbf{1}_{\text{ReLU active at all layers}}, \quad \forall j \in [1, r]
    \end{equation}

    This shows that the gradient magnitude is large if all the ReLUs are active and weights are large, despite the input being near-zero, and could obscure the importance of the relevant features during training.

\end{proof}

%\begin{lemma}
%    \label{lemma:1_2}
%    Small irrelevant inputs can cause large output changes
%\end{lemma}
%\begin{proof}
%    This is typical in overparameterized networks where irrelevant features can be assigned large gradients. We analyze the first-order Taylor approximation of the network around a point \( x_0 \):

%   \begin{equation}
%        f(x) \approx f(x_0) + \nabla f(x_0)^T (x - x_0)
%    \end{equation}

%    By Lemma \ref{lemma:1_1} we can assume that there exists:
%    \begin{equation}
%        \|x_{\text{irr}}\| \ll 1, \quad \left\| \frac{\partial f}{\partial x_{\text{irr}}} \right\| \gg \left\| \frac{\partial f}{\partial x_{\text{rel}}} \right\|
%    \end{equation}

%    Then, the change in output is dominated by the irrelevant features:
%    \begin{equation}
%        \left| \nabla f(x_0)^T (x - x_0) \right| \approx \underbrace{\left| \sum_{j=r+1}^n \frac{\partial f}{\partial x_j}(x_0) \cdot (x_j - x_{0j}) \right|}_{irrelevant} \gg \underbrace{\left| \sum_{i=1}^r \frac{\partial f}{\partial x_i}(x_0) \cdot (x_i - x_{0i}) \right|}_{relevant}
%    \end{equation}

%\end{proof}

\subsection{Effect of Small and Irrelevant Inputs when attaching an attention model without feature removal}

\begin{lemma}
    \label{lemma:1_3}
    Given \( g(\cdot): \mathbb{R}^n \to [0, 1]^n \) an attention architecture with no feature removal, where $g(x)_j \approx 0 ~|~ \forall j \in [1, r]$, there could exists some $i \in [n-r, n]$ that can cause large output changes to $f(g(x) \circ x)$ and also large variations in the training procedure given that $\| \nabla_{x_i} f(g(x) \circ x)\| \gg \| \nabla_{x_j} f(g(x) \circ x)\|, ~\forall j \in [1, r]$. 
\end{lemma}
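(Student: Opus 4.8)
The plan is to reduce this statement to the mechanism already established in Lemmas~\ref{lemma:1_1} and~\ref{lemma:1_2}, exploiting the fact that an attention layer \emph{without feature removal} acts only as a coordinatewise rescaling of the input and therefore cannot annihilate the problematic deep weight paths. Concretely, I would set $\tilde{x} = g(x) \circ x$, so that $\tilde{x}_k = g(x)_k\, x_k$, and observe the decisive structural difference from hard masking: because $g$ performs no removal, each gate satisfies $g(x)_i > 0$, so the factor multiplying coordinate $i$ is a number in $(0,1]$ that merely attenuates but never eliminates the signal. The downstream map $f$ is the same deep ReLU network analyzed before, now acting on $\tilde{x}$.

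For the output-change claim I would reuse the construction of Lemma~\ref{lemma:1_1}: assume all ReLUs are in their active (linear) regime and choose the weights so that $w_{li} \gg w_{lj}$ for every relevant $j \in [1,r]$. Perturbing the irrelevant coordinate $x_i$ propagates, through the direct multiplicative path, to $\tilde{x}_i = g(x)_i x_i$, and the deep network amplifies it by $\prod_{l=1}^{L} w_{li}$, giving schematically
\begin{equation}
    \big\| f(g(x)\circ x + g(x)_i\,\epsilon^i) - f(g(x)\circ x) \big\| \;\propto\; g(x)_i\,\epsilon \prod_{l=1}^{L} w_{li},
\end{equation}
while every relevant contribution is suppressed by the assumed $g(x)_j \approx 0$. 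Since $g(x)_i$ is only a bounded attenuation, a sufficiently large weight path keeps the right-hand side $\gg 0$: the irrelevant feature still dominates the output even after attention.

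For the gradient claim I would differentiate through the composition. In the active regime the chain rule yields, along the direct path,
\begin{equation}
    \frac{\partial f}{\partial x_i} \;\approx\; g(x)_i \left( \prod_{l=1}^{L} w_{li} \right), \qquad \frac{\partial f}{\partial x_j} \;\approx\; g(x)_j \left( \prod_{l=1}^{L} w_{lj} \right),
\end{equation}
and since $g(x)_j \approx 0$ for relevant $j$ while $g(x)_i \prod_l w_{li}$ remains large, the inequality $\|\nabla_{x_i} f(g(x)\circ x)\| \gg \|\nabla_{x_j} f(g(x)\circ x)\|$ follows exactly as in Lemma~\ref{lemma:1_2}. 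The conclusion is that soft attention does not cure the pathology it is meant to address: both the forward sensitivity and the training gradient stay dominated by an irrelevant coordinate, which is precisely the motivation for DDS's hard removal.

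The main obstacle is the coupling introduced by $g$ depending on $x$: strictly, $\tilde{x}_i = g(x)_i x_i$ contributes a chain-rule term $x_i\,\partial_{x_i} g(x)_i$, and $g(x)$ couples all coordinates, so the clean factorizations above are exact only if one treats the gate $g$ as locally frozen (slowly varying) and isolates the direct multiplicative path. I would make this frozen-gate assumption explicit, in keeping with the ``for the sake of simplicity'' convention of the preceding lemmas, since it is exactly what reduces the problem to the already-proved Lemmas~\ref{lemma:1_1} and~\ref{lemma:1_2}; carrying the argument without freezing $g$ would require controlling $\nabla g$ and lies beyond the intuitive scope of this appendix.
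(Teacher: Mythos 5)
There is a genuine gap, and it sits exactly where you flagged your ``main obstacle.'' The chain-rule term $x_i\,\partial_{x_i} g(x)_i$ that you propose to freeze out is not a nuisance in the paper's argument --- it \emph{is} the argument for the gradient half of the claim. The paper's proof writes $\frac{\partial \mathcal{L}}{\partial x_i} = \frac{\partial \mathcal{L}}{\partial \tilde{x}_i}\cdot\bigl(g(x)_i + x_i\,\frac{\partial g(x)_i}{\partial x_i}\bigr)$ and then observes that, even though $g(x)_i \approx 0$ suppresses the direct attenuation path, the gate $g$ is itself a deep network, so by Lemma~\ref{lemma:1_2} the factor $\bigl\|\frac{\partial g(x)_i}{\partial x_i}\bigr\|$ can be huge; the irrelevant coordinate therefore leaks into training \emph{through the gradient of the gate}, not through the gate value. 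This is also what makes the subsequent Theorem work: multiplying by $\Gamma_i = 0$ kills the entire bracket, including the $x_i\,\partial_{x_i}g(x)_i$ term, which is precisely the term soft attention cannot remove. By declaring the gate locally frozen you erase the phenomenon the lemma exists to exhibit, and your residual argument $\partial f/\partial x_i \approx g(x)_i \prod_l w_{li}$ then has to lean on $g(x)_i$ being bounded away from zero.

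That reliance exposes the second divergence: you read the hypothesis as ``the gate is $\approx 0$ on the relevant coordinates and stays appreciably positive on some irrelevant one.'' The lemma statement's index ranges are indeed garbled, but the paper's proof (``forall $x_i \in x_{\text{irr}}$, $g(x)_i \approx 0$'') and its closing remark (``even if the attention mask is able to correctly detect the irrelevant features\ldots'') make the intended scenario unambiguous: attention \emph{succeeds} at detection, assigning near-zero scores to the irrelevant features, and the point is that this is still not enough. Under that hypothesis your forward bound $g(x)_i\,\epsilon\prod_l w_{li}$ survives only via the same ``near-zero times exponentially large weight path'' mechanism of Lemma~\ref{lemma:1_1} (which is how the paper argues it, via $\tilde{x}_i \approx 0$), but your gradient bound collapses to $(\approx 0)\cdot\prod_l w_{li}$ with no appeal to $\nabla g$ to rescue it. The scenario you actually prove --- attention that wrongly keeps an irrelevant feature alive --- is both weaker and off-message, since DDS's hard mask $\Gamma_M$ retains the top-$M$ gate scores and would fail in exactly the same way there. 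To repair the proposal, drop the frozen-gate assumption, keep the full product rule, and invoke Lemma~\ref{lemma:1_2} applied to $g$ itself to argue that $x_i\,\partial_{x_i}g(x)_i$ can dominate.
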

\begin{proof}
    Let \( x \in \mathbb{R}^n \) be the input vector and \( g(x) \in [0, 1]^n \) be a learned gating function, applied element-wise, so that:
    \begin{equation}
        \tilde{x}_i = g(x)_i \cdot x_i
    \end{equation}
    
    Since forall  $x_i \in x_{\text{irr}}, \quad g(x)_i \approx 0$, then $\tilde{x_i} \approx 0$ too. Thus, by Lemma \ref{lemma:1_1}, we know that small values like $\tilde{x_i}$ can cause high variation at the output $\| f(\tilde{x_i}) \| \gg 0$, and the first part of the proof is complete.

    For the training procedure, we are interested in the gradient of the loss \( \mathcal{L} \) with respect to the input \( x_i \). By the chain rule:
    \begin{equation}
        \frac{\partial \mathcal{L}}{\partial x_i}
        = \frac{\partial \mathcal{L}}{\partial \tilde{x}_i} \cdot \frac{\partial \tilde{x}_i}{\partial x_i}
    \end{equation}

    Recall that:
    \begin{equation}
        \tilde{x}_i = g(x)_i \cdot x_i
        \quad \Rightarrow \quad
        \frac{\partial \tilde{x}_i}{\partial x_i} = g(x)_i + x_i \cdot \frac{\partial g(x)_i}{\partial x_i}
    \end{equation}

    Thus, the full gradient becomes:
    \begin{equation}
        \frac{\partial \mathcal{L}}{\partial x_i} = \frac{\partial \mathcal{L}}{\partial \tilde{x}_i} \cdot \left( g(x)_i + x_i \cdot \frac{\partial g(x)_i}{\partial x_i} \right)
    \end{equation}
    Although $g(x)_i \approx 0$, by Lemma \ref{lemma:1_2} we know that $|| \frac{\partial g(x)_i}{\partial x_i} ||$ can be high and affect the results. Thus, the proof is complete.

    This is an important property, because it demonstrates that, even if the attention mask is able to correctly detect the irrelevant features, its contribution to the model during training and inference could still obscure the importance of the relevant features.
\end{proof}

\subsection{Effect of Small and Irrelevant Inputs when attaching DDS}

\begin{theorem}
    Given \( g(\cdot): \mathbb{R}^n \to [0, 1]^n \) an attention architecture with feature removal, and $\mathbf{\Gamma}_M \in \{0, 1\}^n$ a mask that removes the lower values of $g(\cdot) \approx 0$, we can avoid that irrelevant features affect the task performance during training and inference.
\end{theorem}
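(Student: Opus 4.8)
The plan is to exploit the single structural difference between this setting and Lemma~\ref{lemma:1_3}: the gate is no longer soft. Whereas in Lemma~\ref{lemma:1_3} the masked value was $\tilde{x}_i = g(x)_i \cdot x_i$ with $g(x)_i \approx 0$ but strictly positive, here the effective mask for an irrelevant coordinate is the product $\Gamma_{M,i}\,\tau(\tilde g(x)_i+\delta)\,x_i$, and for every feature dropped by the top-$M$ selection we have $\Gamma_{M,i}=0$ \emph{exactly}. First I would fix an irrelevant index $i \in [n-r,n]$ and assume, as the theorem hypothesizes, that it falls outside the top-$M$ scores so that $\Gamma_{M,i}=0$. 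The masked coordinate is then $\tilde{x}_i = 0$, not merely $\tilde{x}_i \approx 0$, and this exact zero is what I would leverage in both the forward and backward arguments.

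For the inference claim I would substitute this exact zero into the amplification bound from Lemma~\ref{lemma:1_1}. There the dangerous term was $\epsilon \cdot \prod_{l=1}^{L} w_{li}$, which could dominate the relevant contributions precisely because $\epsilon \neq 0$. With the hard mask the corresponding factor is literally $0$, so $0 \cdot \prod_{l=1}^{L} w_{li} = 0$ regardless of how large the weight path $\prod_l w_{li}$ is. Hence the dropped coordinate contributes nothing to $f(\tilde{x})$, and no deep weight path can re-inject it, removing the pathology that soft attention could not suppress.

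For the training claim I would reuse the chain-rule decomposition of Lemma~\ref{lemma:1_3}, now with $\Gamma_{M,i}$ inserted as a factor treated as constant during backpropagation (the top-$M$ selection is a hard, non-differentiable routing operation that carries a stop-gradient). Writing $\tilde{x}_i = \Gamma_{M,i}\,\tau(\tilde g(x)_i+\delta)\,x_i$, the local Jacobian is
\begin{equation}
\frac{\partial \tilde{x}_i}{\partial x_i} = \Gamma_{M,i}\!\left(\tau(\tilde g(x)_i+\delta) + x_i\frac{\partial \tau(\tilde g(x)_i+\delta)}{\partial x_i}\right),
\end{equation}
and the loss gradient is $\partial\mathcal{L}/\partial x_i = (\partial\mathcal{L}/\partial\tilde{x}_i)(\partial \tilde{x}_i/\partial x_i)$. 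The crucial point, in contrast with Lemma~\ref{lemma:1_3}, is that the entire parenthesis is multiplied by $\Gamma_{M,i}=0$, so the surviving term $x_i\,\partial g(x)_i/\partial x_i$ that previously obscured the relevant features is annihilated along with everything else. The same factorization forces the gradient with respect to the selector parameters $\mathbf{\Theta}_S$ through this coordinate to vanish as well, so neither the downstream model nor the selector receives signal from the dropped feature.

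The main obstacle I anticipate is twofold. First, making the stop-gradient treatment of $\Gamma_M$ rigorous rather than a modelling convention: the top-$M$ indicator is piecewise constant in $x$, so its derivative vanishes almost everywhere, and treating it as constant during backpropagation is legitimate away from a measure-zero set of score ties. Second, one must acknowledge that the dropped feature $x_i$ can still influence \emph{which} coordinates enter the top-$M$ set, i.e. it affects $\Gamma_{M,j}$ for $j \neq i$; I would argue this indirect effect is exactly the intended behaviour of the selector and does not reinstate the amplification pathology, since along the masked coordinate itself both the forward value and its gradient are identically zero. Phrasing the conclusion as this exact, coordinate-wise vanishing — as opposed to the merely approximate suppression achievable by the soft gate of Lemmas~\ref{lemma:1_1}--\ref{lemma:1_3} — gives the strongest defensible form of the statement.
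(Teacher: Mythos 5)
Your proposal is correct and follows essentially the same route as the paper's own proof: exploit the exact zero of the hard mask $\Gamma_{M,i}=0$ to kill the forward contribution via the weight-path bound of Lemma~\ref{lemma:1_1}, and insert $\Gamma_{M,i}$ as a multiplicative factor into the chain-rule decomposition of Lemma~\ref{lemma:1_3} to kill the gradient. Your added remarks on the stop-gradient treatment of the top-$M$ indicator and the residual indirect influence of $x_i$ on $\Gamma_{M,j}$ for $j \neq i$ are careful refinements the paper omits, but they do not change the core argument.
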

\begin{proof}
    Let \( x \in \mathbb{R}^n \) be the input vector, $x_i \in x_{\text{irr}}$ an irrelevant feature, and let \( g(x) \in [0, 1]^n \) be a learned gating function, applied element-wise, so that:
    \begin{equation}
        \hat{x}_i = \Gamma_i \cdot g(x)_i \cdot x_i = \Gamma_i \cdot \tilde{x}_i
    \end{equation}
    
    By definition, we have that, since $g(x)_i \approx 0, \forall i \in [n-r, n]$, then 
    \begin{equation}
        \Gamma_i = 0, ~\forall i \in [n-r, n].
    \end{equation}
    
    Consequently:
    \begin{equation}
        \hat{x}_i = 0, \quad i \in [n-r, n],
    \end{equation}
    that is, if the attention mask is able to correctly detect the irrelevant features, their values will be nullified.
    
    Thus, by Lemma \ref{lemma:1_1} we know that:
    \begin{equation}
        f(\hat{x}) \propto \hat{x}_j \cdot \prod_{l=1}^L w_{lj} \gg 0 \cdot \prod_{l=1}^L w_{li} = 0, \quad \forall j \in [1, r],
    \end{equation}
    demonstrating that irrelevant features do not affect the output result.

    Related to the training procedure, by Lemma \ref{lemma:1_3} we have that
    \begin{equation}
        \frac{\partial \mathcal{L}}{\partial x_i} = \Gamma_i \cdot \frac{\partial \mathcal{L}}{\partial \tilde{x}_i} \cdot \left( g(x)_i + x_i \cdot \frac{\partial g(x)_i}{\partial x_i}\right) = 0 \cdot \frac{\partial \mathcal{L}}{\partial \tilde{x}_i} \cdot \left( g(x)_i + x_i \cdot \frac{\partial g(x)_i}{\partial x_i}\right) = 0,
    \end{equation}
    demonstrating that irrelevant features do not affect the gradient of the network, and the proof is complete
\end{proof}

\section{World Models: A Detailed Overview}
\label{appendix:world-models}

In model-based reinforcement learning (RL), the term \emph{world model} typically refers to the combination of two main components that together learn a generative representation of the environment:

\begin{itemize}
    \item \textbf{Vision Model (V):} Compresses each high-dimensional observation (e.g., an image) into a lower-dimensional latent vector \(\mathbf{z}_t\). A common choice for this part is a Variational Autoencoder (VAE), which learns both an encoder and a decoder. The encoder maps images to latent representations, and the decoder is capable of reconstructing or “imagining” frames purely from latent codes.
    \item \textbf{Memory Model (M):} Captures the \emph{temporal} dynamics of the environment in latent space. Typically, this involves training a recurrent model (e.g., an RNN) that outputs a Mixture Density Network (MDN) predicting the next latent vector \(\mathbf{z}_{t+1}\) given the current latent \(\mathbf{z}_t\), the agent’s action \(\mathbf{a}_t\), and the hidden state \(\mathbf{h}_t\). Formally:
    \[
        \mathbf{z}_{t+1} \sim \mathcal{P}(\mathbf{z}_{t+1} \mid \mathbf{z}_t, \mathbf{a}_t, \mathbf{h}_t),
    \]
    where \(\mathcal{P}\) is modeled by the parameters of the mixture components output by the MDN-RNN.
\end{itemize}

Once trained, these two modules—the Vision Model and the Memory Model—constitute the \emph{world model}. They allow an agent to generate sequences of predicted future latent states, which can be decoded back into image space if desired. In essence, the Vision Model provides spatial compression and reconstruction, while the Memory Model predicts how these compressed representations evolve over time, effectively simulating environment dynamics in a more manageable latent space.

\subsection{Controller}
Although crucial for decision-making, the \emph{Controller} (or policy) lies \emph{outside} the world model itself. The Controller uses information from the latent state \(\mathbf{z}_t\) and the RNN hidden state \(\mathbf{h}_t\) to decide which action \(\mathbf{a}_t\) to take. In the specific setting we consider here, the Controller is a simple neural network (without a hidden layer) of the form:
\[
    \mathbf{a}_t = W_c \begin{bmatrix}\mathbf{z}_t \\ \mathbf{h}_t\end{bmatrix} + \mathbf{b}_c,
\]
where \(\mathbf{z}_t \in \mathbb{R}^{d}\) is the current latent state, \(\mathbf{h}_t\) is the hidden state of the RNN, and \(W_c\), \(\mathbf{b}_c\) are learnable parameters. In practice, the Controller can be optimized to maximize returns using a variety of standard algorithms (e.g., evolutionary strategies, policy gradients). In our experiments, it is trained on \emph{real environment} rollouts (i.e., no synthetic data is used to train the Controller).

\subsection{Why World Models?}

\paragraph{Sample Efficiency}
A key motivation behind world models is \emph{sample efficiency}. By learning a generative model of the environment, the goal is to create an internal simulation that is so realistic that the controller can be trained solely within these internal "dreams." This approach would be highly efficient because it eliminates the need to process real images, relying instead on compact latent representations. Developing robust internal models can still yield significant benefits. These include enhanced representations for downstream decision-making and potential improvements in how quickly the agent can learn from real-world samples.

\paragraph{Partial Observability and Model Imperfection}
Because the world model only “sees” what is encoded in the Vision Model’s latent vectors, it may miss unobserved or unmodeled factors that influence the true state. If the Memory Model or the Vision Model are poorly learned (e.g., due to insufficient data or training instability), the latent transitions and reconstructions will deviate from reality. Despite these challenges, well-trained world models often provide a powerful abstraction that can simplify policy learning.

\subsection{Training Procedure}
A conventional workflow for building and using a world model can be summarized as follows:

\begin{enumerate}
    \item \textbf{Data Collection:} Gather trajectories of observations and actions using a random or exploratory policy in the real environment.
    \item \textbf{Train the Vision Model:} Fit an autoencoder (e.g., a VAE) on the collected frames so that each image \(\mathbf{x}_t\) is mapped into a latent vector \(\mathbf{z}_t\), and the model can reconstruct \(\mathbf{x}_t\) from \(\mathbf{z}_t\).
    \item \textbf{Train the Memory Model:} Use latent sequences \(\{(\mathbf{z}_t, \mathbf{a}_t)\}\) to train a recurrent network that outputs mixture density parameters. At each time step \(t\), it predicts a distribution over possible next latents \(\mathbf{z}_{t+1}\).
    \item \textbf{Train the Controller:} Employ the (fixed) Vision and Memory Models to encode real environment observations into latents, and update the Controller’s parameters to maximize an RL objective (e.g., via CMA-ES \cite{DBLP:journals/corr/Hansen16a}).
\end{enumerate}

Because the environment dynamics are approximated by the Memory Model directly in latent space, the agent can generate short- or medium-horizon predictions. In some setups, these predictions can be used for planning or to reduce real-environment interactions. In our setting, the Controller is trained solely with real-environment rollouts, even though \emph{in principle} the world model could be used for additional hypothetical scenarios.

\subsection{Beyond Reinforcement Learning: Real-Time Game Generation} While originally introduced as a model-based RL strategy, world models have recently gained traction in domains \emph{beyond} direct policy learning—particularly in the game-generation community. Here, the world model’s generative capacity is harnessed as a form of \emph{real-time game engine}, where game levels or scenarios can be dynamically created through latent-space rollouts. By sampling how states evolve over time using the Memory Model and then decoding them back to an observable format (e.g., 2D or 3D graphics), developers and researchers can produce procedurally generated worlds that respond to player actions in a highly adaptive manner. This “world model as a game engine” paradigm enables unique forms of content creation and interactive storytelling, blending the boundaries between model-based RL and creative generative applications.

% \subsection{Summary}
% World models~\cite{DBLP:journals/corr/abs-1803-10122} are designed to learn compressed representations of both the visual and temporal aspects of an environment. By coupling a \emph{Vision Model} with a \emph{Memory Model}, they let the agent—or an external Controller—simulate future states in a manageable latent space. These internal simulations can facilitate more efficient policy training and potentially reduce the reliance on expensive real-world interactions, although the exact usage of those simulations varies by experiment and design choice.

\clearpage
\section{Visualization}
\label{appendix:visualization}

This appendix offers visualizations illustrating key operational aspects and the performance of the proposed Dynamic Data Selection (DDS) methodology. These figures showcase DDS's feature selection, learned internal representations, and its impact on reconstruction and generation tasks, complementing the paper's quantitative results.

\subsection{DDS Selection Analysis}
\label{subsec:dds_selection}

Finally, Figure~\ref{fig:dds_mask_percentages_carracing} illustrates how the DDS module behaves at different selection thresholds, demonstrating its ability to identify salient features even at extremely sparse selection rates.

\begin{figure}[!htbp]
\centering
\includegraphics[width=1.0\columnwidth]{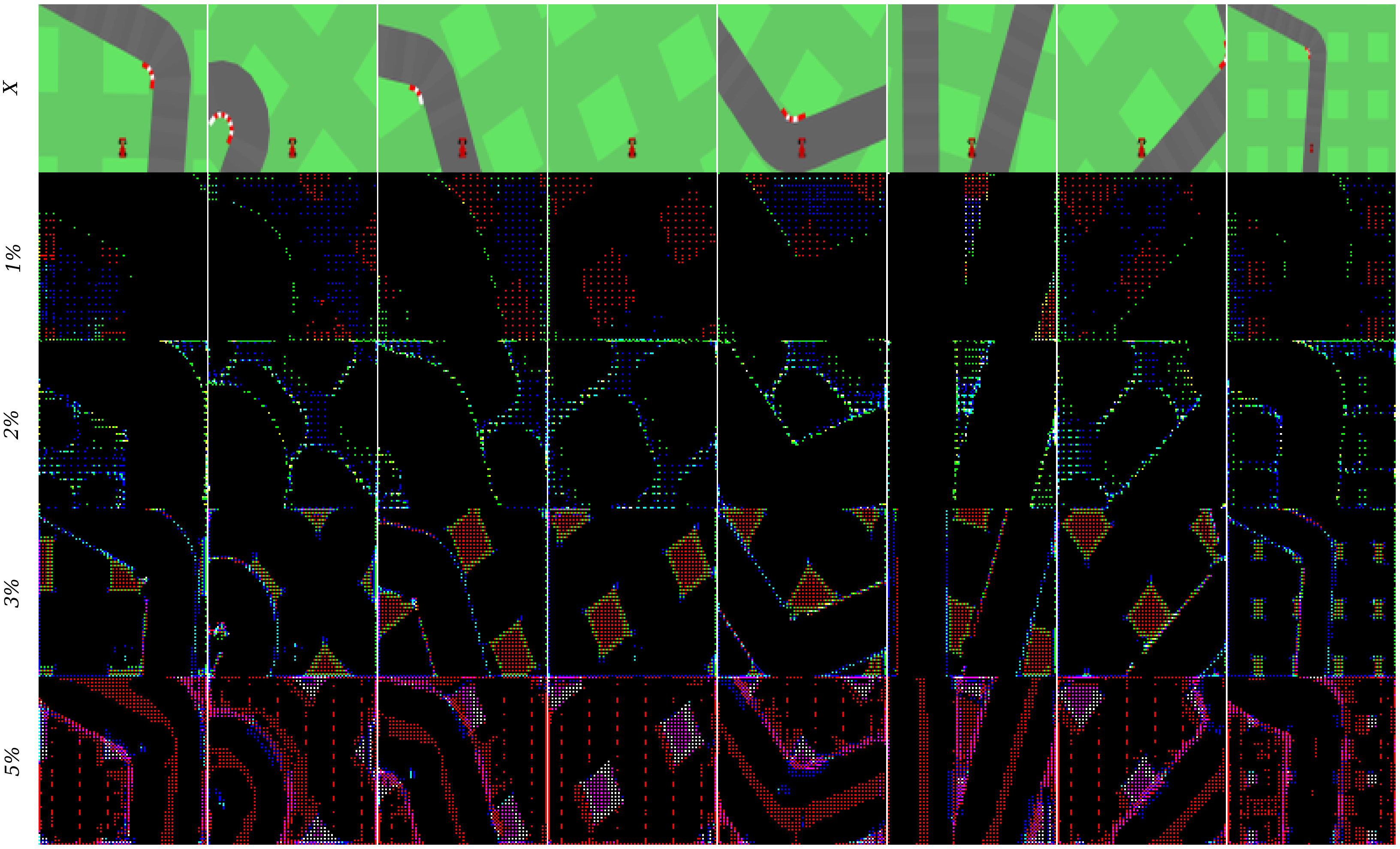}
\caption{Visualization of DDS masks generated at varying selection percentages for the \texttt{CarRacing-v3} environment.
The rows display different input frames.
The columns show (from left to right): the original input ($X$), the output generated with the DDS ($g(\mathbf{X})$) 1\% M, 2\%, 3\% M, 5\% M, and 8\% M.
This figure illustrates the behavior of the DDS module in selecting salient features across different sparsity levels.}
\label{fig:dds_mask_percentages_carracing}
\end{figure}

\subsection{Internal Representations}
\label{subsec:internal_representations}

Figure~\ref{fig:car_internal_representation} and \ref{fig:mario_internal_representation} showcases the intermediate steps in our DDS+VAE model's processing pipeline. These visualizations reveal how the model captures and processes key features within the input data.

\vspace{0.5cm}

\begin{figure}[!htbp]
\centering
\includegraphics[width=1.0\columnwidth]{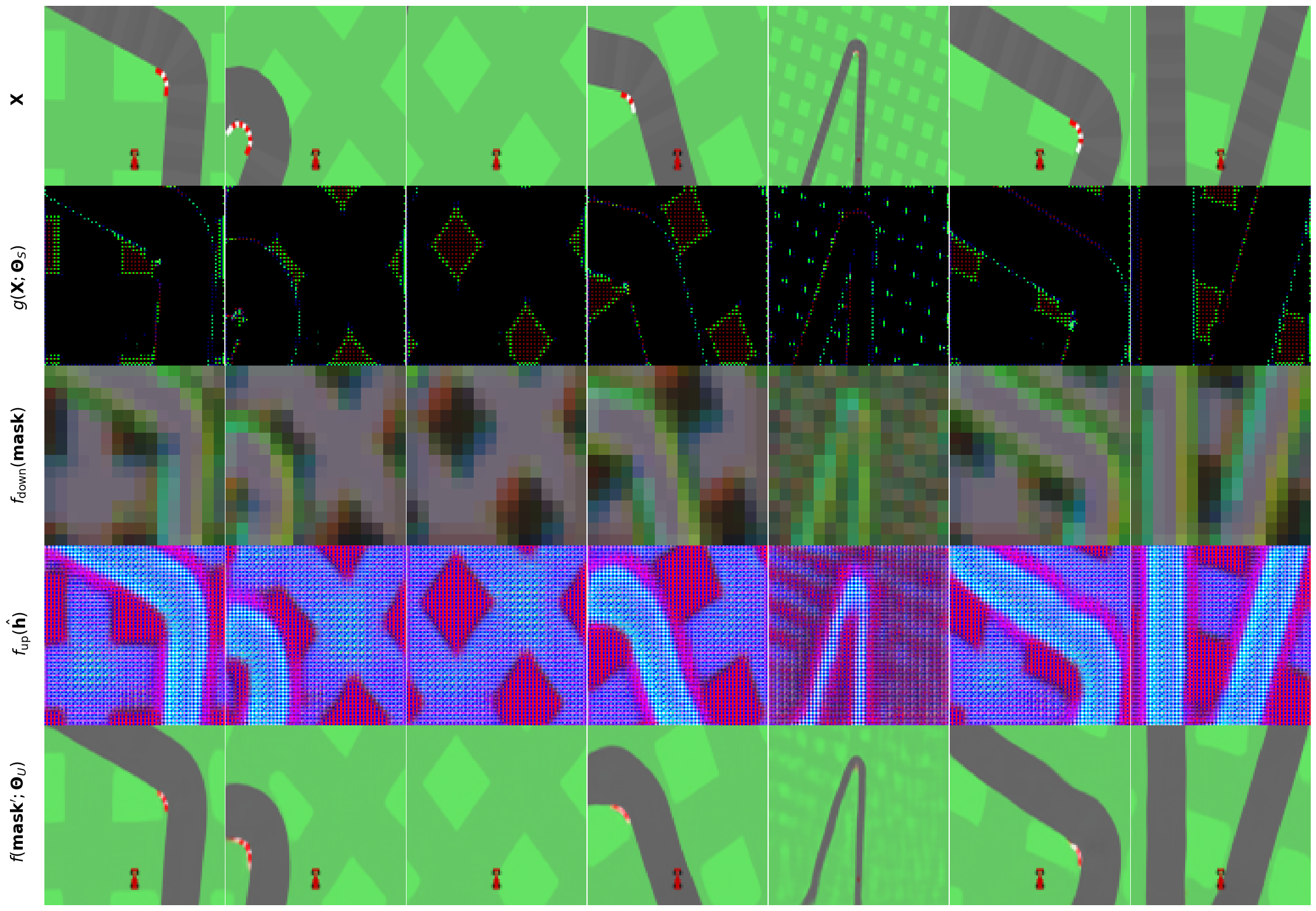}
\caption{Visualization of intermediate stages in our DDS+VAE model trained with $M = 3\%$ selection threshold. The figure illustrates how the model progressively transforms input data through its pipeline. The rows show (from top to bottom): the original input ($x$), the DDS-generated mask ($mask$) with a 3\% selection, the intermediate representation ($h$), the internal representation ($mask'$), and the final reconstructed image ($\hat{x}$).}
\label{fig:car_internal_representation}
\vspace{-0.3cm}
\end{figure}

\begin{figure}[!htbp]
\centering
\includegraphics[width=1.0\columnwidth]{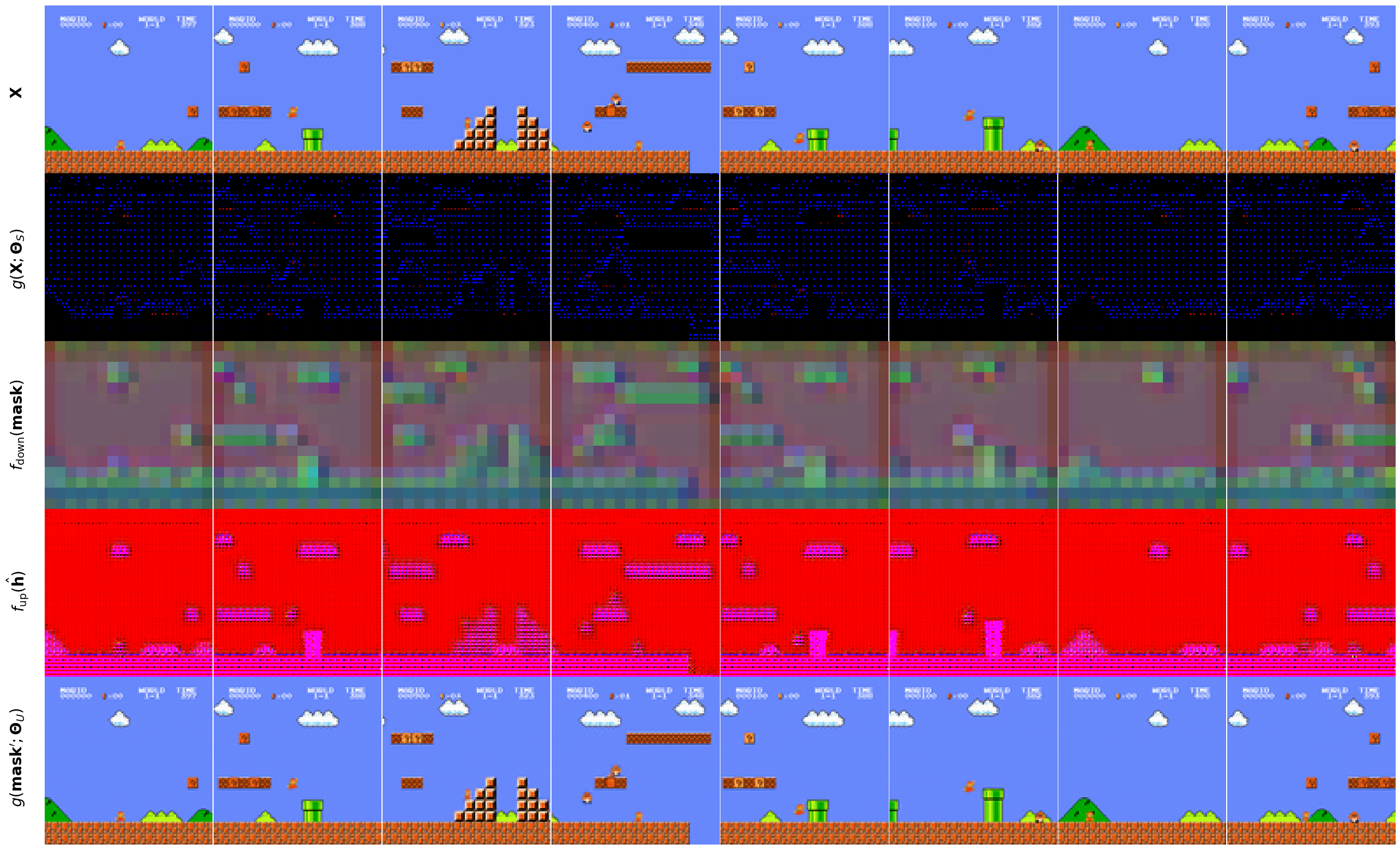}
\caption{Internal representations of the DDS+VAE model for the \texttt{SuperMarioBros-v0} environment. The rows show (from top to bottom): the original input ($x$), the DDS-generated mask ($mask$) with a 3\% selection, the intermediate representation ($h$), the internal representation ($mask'$), and the final reconstructed image ($\hat{x}$).}
\label{fig:mario_internal_representation}
\end{figure}

For a more detailed view of the internal representations, Figure~\ref{fig:dds_vae_internal_representations_carracing_percentages} (on the following page) presents the complete processing pipeline across different DDS selection percentages in the CarRacing-v3 environment.

\clearpage

\begin{figure}[!htbp]
\centering
\includegraphics[width=0.9\columnwidth]{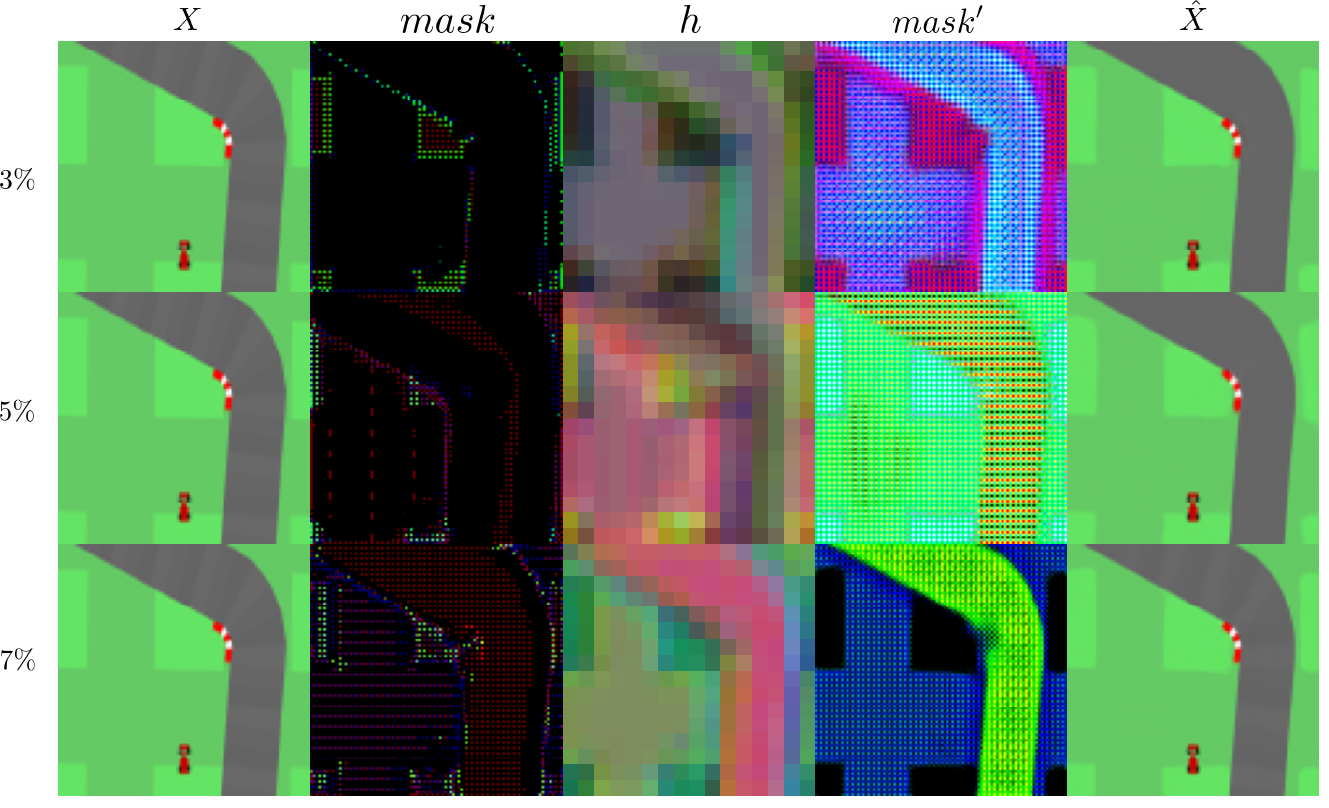}
\caption{Detailed internal representations for the proposed DDS+VAE model on the \texttt{CarRacing-v3} environment.
The columns display (from left to right): original input ($x$), the learned DDS mask ($mask$), the intermediate representation ($h$), an intermediate ($mask'$), and the final reconstruction ($\hat{x}$).
The rows correspond to different DDS selection percentages: 3\% (top row), 5\% (middle row), and 7\% (bottom row).}
\label{fig:dds_vae_internal_representations_carracing_percentages}
\end{figure}

\subsection{Dream Sequence Comparisons}
\label{subsec:dream_sequences}

The ability to generate coherent and detailed dream sequences is crucial for world models. Figure~\ref{fig:dream_seq_mario_comparison} compares dream sequences generated by different models for the \textit{SuperMarioBros-v0} environment. Note that similar results for the \textit{CarRacing-v3} environment are presented in Figure \ref{fig:dream_combined} of the main paper.

\begin{figure}[!htbp]
\centering
\includegraphics[width=1.0\columnwidth]{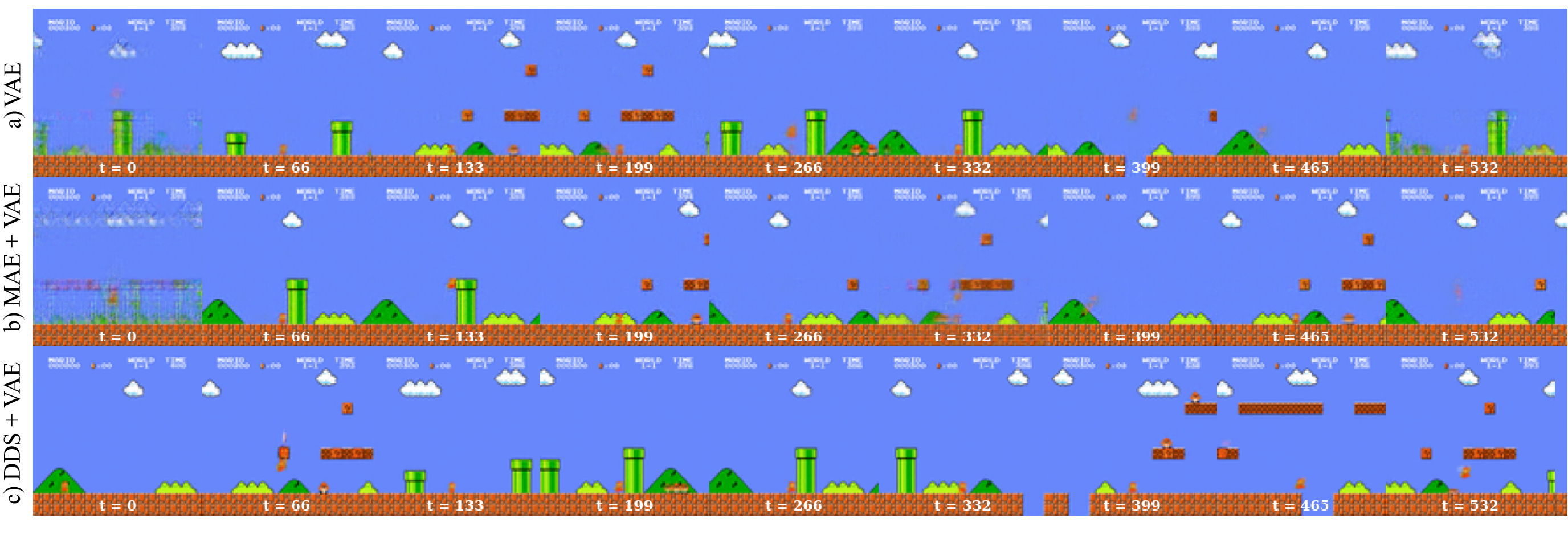}
\caption{Comparison of generated dream sequences for the \texttt{SuperMarioBros-v0} environment.
The rows display sequences from different models: a) standard VAE, b) MAE+VAE, and c) our proposed DDS+VAE.
This visualization highlights the qualitative improvements in sequence coherence and detail achieved with the DDS+VAE model with $M=4\%$ selected features.}
\label{fig:dream_seq_mario_comparison}
\end{figure}

\clearpage

\subsection{Reconstruction Quality Analysis}
\label{subsec:reconstruction_quality}

A key advantage of our DDS+VAE approach is the superior reconstruction quality it achieves. Figure \ref{fig:reconstruction_comparison_car} and \ref{fig:reconstruction_comparison_mario_models} provides a visual comparison between our model and baseline approaches on the \textit{CarRacing-v3} and \textit{SuperMarioBros-v0}	environment.

\begin{figure}[!htbp]
\centering
\includegraphics[width=1.0\columnwidth]{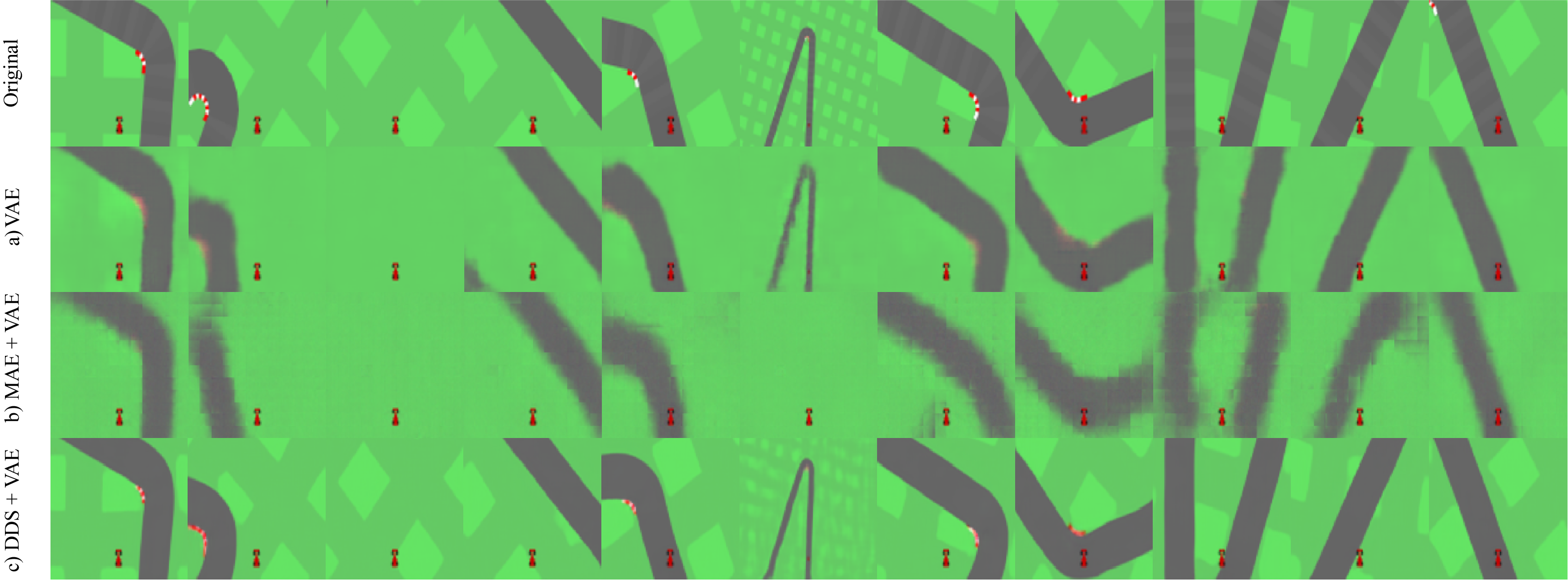}
\caption{Reconstruction quality comparison on the \texttt{CarRacing-v3} environment.
The columns show different frames from the environment.
The rows compare: a) the original input ($x$), b) reconstructions from a standard VAE, c) MAE+VAE model, d) and our proposed DDS+VAE model with $M=4\%$ selected features.
The DDS+VAE model demonstrates superior reconstruction fidelity.}
\label{fig:reconstruction_comparison_car}
\end{figure}

\begin{figure}[!htbp]
\centering
\includegraphics[width=1.0\columnwidth]{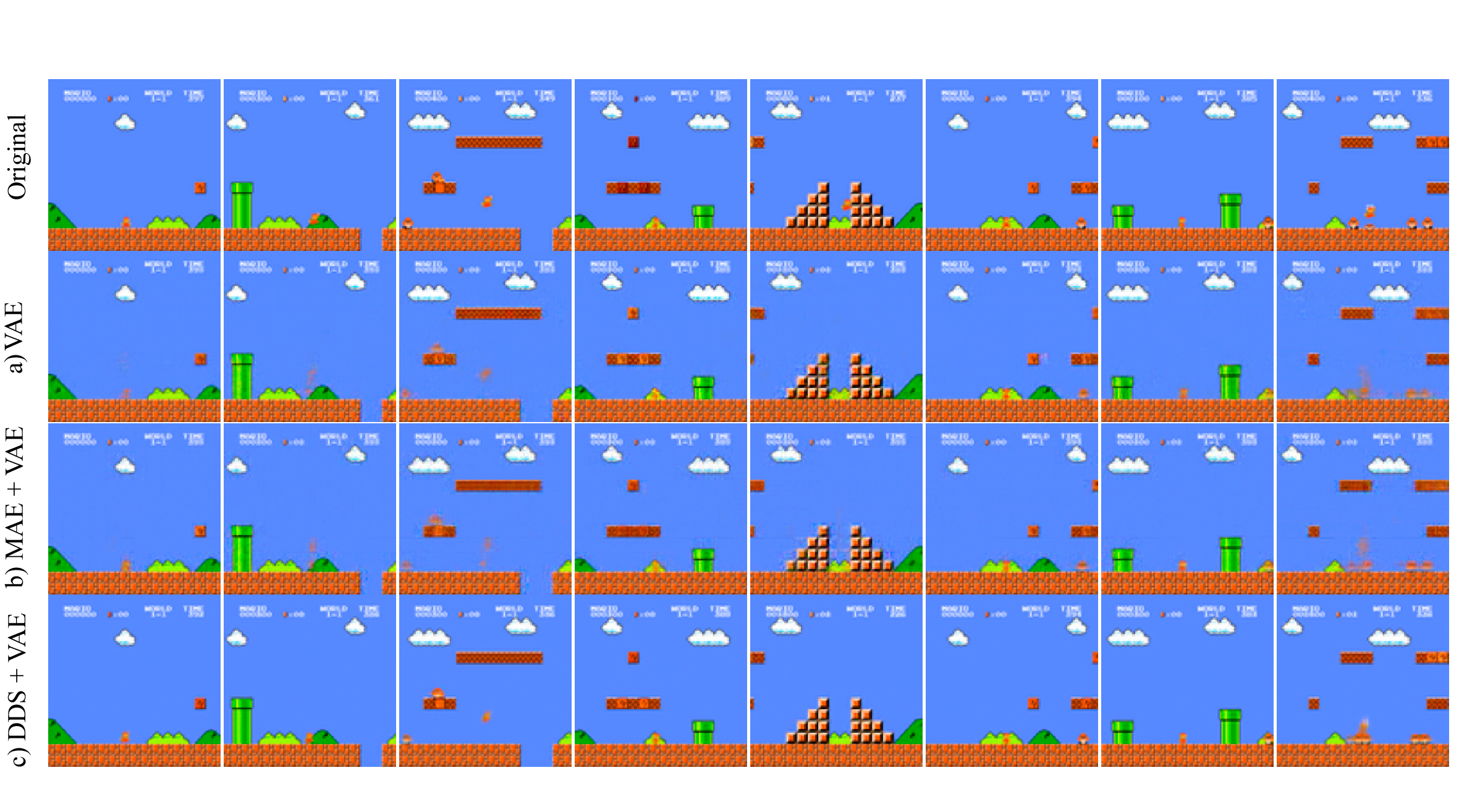}
\caption{Reconstruction quality comparison on the \texttt{SuperMarioBros-v0} environment.
The columns show different frames from the environment. The rows compare: a) the original input ($x$), b) reconstructions from a standard VAE, c) MAE+VAE model, d) and our proposed DDS+VAE model with $M=4\%$ selected features.}
\label{fig:reconstruction_comparison_mario_models}
\end{figure}

\clearpage

% \section{Experimental Setup Details}
% \label{sec:experim_stup}

% \subsection{Hardware Configuration}
% All experiments were conducted on a workstation with the GPU NVIDIA GeForce RTX 3090Ti, featuring 24GB of memory. A 12th Gen Intel(R) Core(TM) i7-12700K, with 20 cores and 64GB RAM memory.

\section{Additional Ablation Studies and Dataset Analysis}
\label{sec:additional_ablations}

This section presents further ablation studies and analyses to provide deeper insights into the DDS framework's behavior and robustness. We explore the impact of core DDS components.

\subsection{Ablation of DDS Core Mechanisms}
\label{subsec:ablation_core_mechanisms}

To verify the importance of the learned selection mechanism and the feature importance scores produced by DDS, we conducted two additional ablation experiments. These experiments evaluate the reconstruction performance on CIFAR-10 against the standard DDS and a naive autoencoder baseline:
\begin{enumerate}
    \item \textbf{Ex1: No Selection Mechanism During Inference:} All feature selection scores are forced to 1 for all pixels, effectively bypassing the dynamic selection.
    \item \textbf{Ex2: Uniform Feature Importance During Inference:} The $M$ selected features are retained, but their learned importance scores are overridden and set to 1.
\end{enumerate}

\begin{table}[h!]
\centering
\begin{tabular}{lccccc}
\toprule
\textbf{Method} & \multicolumn{5}{c}{\textbf{M (Number of Selected Features)}} \\
\cmidrule(lr){2-6}
& \textbf{64} & \textbf{128} & \textbf{256} & \textbf{512} & \textbf{1024} \\
\midrule
Naive AE & 0.018 & 0.012 & 0.008 & 0.005 & 0.004 \\
DDS      & 0.016 & 0.009 & 0.005 & 0.001 & 0.0002 \\
\midrule
Ex1 (No Selection) & 0.2 & 0.2 & 0.2 & 0.2 & 0.2 \\
Ex2 (Uniform Importance) & 0.2 & 0.2 & 0.2 & 0.2 & 0.2 \\
\bottomrule
\end{tabular}
\vspace{1.0em}
\caption{Reconstruction MSE on CIFAR-10 for core DDS mechanism ablations. Both Ex1 and Ex2 demonstrate significantly degraded performance, highlighting the criticality of both the dynamic selection process and the learned feature importance scores for effective reconstruction.}
\label{tab:ablation_core_mech_results}
\end{table}

As anticipated, both Ex1 and Ex2 lead to very poor reconstruction quality. The features masked during training are not learned by the subsequent reconstruction network. Forcing their inclusion or assigning uniform importance at inference time introduces information that the model is not trained to handle, validating DDS's role as an effective feature selector that learns meaningful instance-specific feature importances. 

\subsection{Extended Dataset Analysis: Stanford Cars}
\label{subsec:stanford_cars_analysis}

To assess the generalization of DDS's reconstruction performance and the rationale for selecting $M$ values, we conducted experiments on the Stanford Cars dataset \cite{Yang_2015_CVPR}. This dataset features images with higher resolution (360x240 pixels) compared to CIFAR-10. The chosen $M$ values correspond to the number of features in the latent representation of a Naive Autoencoder when varying its initial channel capacity in powers of two. Table \ref{tab:stanford_cars_results} presents the Mean Absolute Error (MAE) and Mean Squared Error (MSE) for reconstruction.

\begin{table}[h!]
\centering
\begin{tabular}{lcccc}
\toprule
\textbf{M (\% of total features)} & \textbf{Naive MAE} & \textbf{Naive MSE} & \textbf{DDS MAE} & \textbf{DDS MSE} \\
\midrule
3136 (1.2\%)   & 7.99E-02 & 1.33E-02 & 7.48E-02 & 1.18E-02 \\
6272 (2.4\%)   & 6.73E-02 & 1.05E-02 & 3.83E-02 & 3.47E-03 \\
12544 (4.8\%)  & 4.52E-02 & 5.67E-03 & 2.26E-02 & 1.27E-03 \\
25088 (9.7\%)  & 3.55E-02 & 3.47E-03 & 1.34E-02 & 4.91E-04 \\
50176 (19.4\%) & 3.09E-02 & 2.86E-03 & 6.45E-03 & 1.00E-04 \\
\bottomrule
\end{tabular}
\vspace{1.0em}
\caption{Reconstruction MAE and MSE on the Stanford Cars dataset. DDS consistently outperforms the naive autoencoder, and the error reduction pattern mirrors that observed on CIFAR-10: doubling $M$ approximately halves the reconstruction error for DDS, a scaling not achieved by the naive baseline.}
\label{tab:stanford_cars_results}
\end{table}

The results on Stanford Cars corroborate the findings from CIFAR-10. DDS achieves lower reconstruction errors than the naive autoencoder across all tested $M$ values. Notably, as the number of selected features $M$ is doubled, the reconstruction error for DDS is almost halved, demonstrating efficient utilization of the selected features. This contrasts with the naive autoencoder, which exhibits diminishing returns in error reduction with an increasing number of features.

\subsection{Ablation on Input Image Resolution: Stanford Cars}
\label{subsec:image_resolution_ablation}

We further investigated the impact of input image resolution on DDS's reconstruction performance. Using the Stanford Cars dataset, we trained the same network architecture with images resized to 32x32, 64x64, and 128x128 pixels, while varying the percentage of selected features $M$. The results are shown in Table \ref{tab:image_resolution_results}.

\begin{table}[h!]
\centering
\begin{tabular}{lcccc}
\toprule
\textbf{Image Size / M (\%)} & \textbf{3\%} & \textbf{6\%} & \textbf{12\%} & \textbf{24\%} \\
\midrule
32x32   & 0.0102 & 0.0081 & 0.0076 & 0.0068 \\
64x64   & 0.0096 & 0.0075 & 0.0067 & 0.0062 \\
128x128 & 0.0083 & 0.0077 & 0.0063 & 0.0059 \\
\bottomrule
\end{tabular}
\vspace{1.0em}
\caption{Reconstruction MSE on the Stanford Cars dataset for varying input image resolutions and percentages of selected features ($M$).}
\label{tab:image_resolution_results}
\end{table}

The results indicate that larger input images tend to yield lower MSE scores for a given percentage of selected features. This behavior is expected, as higher-resolution images often contain more spatially redundant information (e.g., smooth textures, uniform backgrounds) which are inherently easier for autoencoder-based models to reconstruct efficiently. DDS effectively leverages this by selecting the most informative features even in sparser, lower-resolution inputs, but benefits from the increased reconstructability of larger, more detailed images.

\subsection{Training Time for World Model Experiments}
The World Model experiments involved a two-stage training strategy as detailed in Section \ref{world-models-introduction}. The following table summarizes the number of parameters, training epochs, and computation time for the different models compared in this study. The times reported are for the completion of both training stages. The parameter counts correspond to those mentioned in Section \ref{world-models-introduction}.

\begin{table}[h!]
\centering
\begin{tabular}{lccc}
\toprule
\textbf{Models} & \textbf{Parameters} & \textbf{Epochs (stage1 + stage2)} & \textbf{Time (stage1 + stage2)} \\
\midrule
VAE       & 4,348,547 & 1         & 85 min \\
MAE+VAE   & 6,686,464 & 1 + 2     & 40 min + 60 min \\
DDS+VAE   & 4,039,089 & 1 + 2     & 120 min + 55 min \\
\bottomrule
\end{tabular}
\vspace{1.0em}
\caption{Comparison of different world model vision architectures, detailing their parameter counts, training epochs split by stage, and total training time.}
\label{tab:model_comparison}
\end{table}

The DDS+VAE model, despite its competitive or superior performance in several metrics as discussed in Section \ref{world-models-introduction}, also demonstrates efficiency in terms of parameter count compared to the baselines. The training times provide insight into the computational cost associated with each approach under the described two-stage training regime.

\clearpage

\end{document}